\newtheorem{definition}{Definition}
\newtheorem{theorem}{Theorem}
\newtheorem{proposition}{Proposition}
\newtheorem{lemma}{Lemma}
\newtheorem{corollary}{Corollary}
\theoremstyle{remark}
\newtheorem{remark}{Remark}
\newtheorem{assumption}{Assumption}
\newcommand{\Lupper}{\mathfrak{L}_{\textrm{\textup{upper}}}}
\newcommand{\Llower}{\mathfrak{L}_{\textrm{\textup{lower}}}}
\newcommand{\gradw}{\nabla_{\omega}}
\newcommand{\wb}{\omega}
\newcommand{\norm}[1]{\left\lVert#1\right\rVert}
\def\eqref#1{equation~\ref{#1}}
\def\1{\bm{1}}
\DeclareMathAlphabet{\mathsfit}{\encodingdefault}{\sfdefault}{m}{sl}
\SetMathAlphabet{\mathsfit}{bold}{\encodingdefault}{\sfdefault}{bx}{n}
\newcommand{\R}{\mathbb{R}}
\DeclareMathOperator*{\argmin}{arg\,min}
\title{Iterative Training of Physics-Informed \\ Neural Networks with Fourier-enhanced \\ Features}
\author{Yulun Wu, Miguel Aguiar, Karl H.~Johansson \& Matthieu Barreau \\
Division of Decision and Control Systems\\
Digital Futures and KTH Royal Institute of Technology\\
Stockholm, Sweden \\
\texttt{\{yulunw,aguiar,kallej,barreau\}@kth.se}
}
\begin{document}

\maketitle

\begin{abstract}
Spectral bias, the tendency of neural networks to learn low-frequency features first, is a well-known issue with many training algorithms for physics-informed neural networks (PINNs).
To overcome this issue, we propose IFeF-PINN, an algorithm for iterative training of PINNs with Fourier-enhanced features. The key idea is to enrich the latent space using high-frequency components through random Fourier features. This creates a two-stage training problem: (i) estimate a basis in the feature space, and (ii) perform regression to determine the coefficients of the enhanced basis functions. For an underlying linear model, it is shown that the latter problem is convex, and we prove that the iterative training scheme converges. Furthermore, we empirically establish that random Fourier features enhance the expressive capacity of the network, enabling accurate approximation of high-frequency PDEs.
Through extensive numerical evaluation on classical benchmark problems, the superior performance of our method over state-of-the-art algorithms is shown, and the improved approximation across the frequency domain is illustrated.

\end{abstract}

\section{Introduction}
\label{sec:intro}

Capturing high-frequency behavior is central to modeling complex phenomena such as wave propagation, turbulence, and quantum dynamics. Traditional numerical methods, including spectral approaches \citep{boyd2001chebyshev}, multiscale schemes \citep{weinan2003heterognous}, and oscillatory quadrature \citep{iserles2005efficient}, have achieved notable success but often require problem-specific adaptations or become prohibitively costly in complex or high-dimensional settings.

There is a need for new approximation strategies that capture high-frequency behavior without sacrificing stability or tractability. Deep-learning surrogates of differential equations are a promising alternative, such as Physics-Informed Neural Networks (PINNs), which offer a grid-free alternative by combining data and physical models within a neural network framework \citep{raissi2017physics}. This paradigm has shown strong performance in solving partial differential equations (PDEs) and inferring hidden dynamics, benefiting 
adaptability to complex geometries \citep{costabal2024delta}, and high-dimensional scalability \citep{hu2024tackling}. Related approaches such as Fourier Neural Operators \citep{li2021fourier} and DeepONet \citep{lu2021learning} further expand its reach. Despite these advances, PINN methods remain limited by \emph{spectral bias}---the tendency of neural networks to learn low-frequency components first---which hinders accurate recovery of oscillatory solutions \citep{rahaman2019spectral,xu2025understanding,lin2021operator,qin2024toward}.

Several strategies have been proposed to mitigate spectral bias, including weight balancing \citep{wang2021understanding,krishnapriyan2021characterizing}, resampling \citep{lau2024pinnacle,tang2024adversarial,song2025rl}, and curriculum or architecture-based approaches \citep{sirignano2018dgm,waheed2022kronecker,chai2024overcoming,mustajab2024physics,eshkofti2025vanishing,wang2024multi}. Table~\ref{tab:existing_methods} summarizes some of the most representative approaches. While effective in certain cases, these methods remain tied to single-level optimization frameworks, where feature learning and coefficient fitting are intertwined in neural networks, limiting both robustness and theoretical guarantees.

To address this gap, we draw inspiration from classical numerical PDE solvers, which approximate solutions using basis functions, and propose a novel neural network architecture and a tailored training algorithm. 
The key idea is to create a feed-forward neural network with three components, as illustrated in Figure~\ref{fig:general_figure}. First, the hidden layers $h_{\omega}$ generate a nominal basis in the latent functional space. 
Next, this basis is extended to $\psi_D$ through random Fourier features (RFF, introduced by \cite{rahimi2007random}), which may include potentially higher-frequency elements, to span a larger latent space.
Finally, the last linear layer performs regression on these extended basis vectors. The first and last blocks can be optimized separately, resulting in a two-stage iterative scheme alternating between latent basis construction and regression on output coefficients. A major feature of this framework, related to extreme learning machines \citep{dwivedi2020physics}, is that for linear differential equations, the regression stage is convex and achieves asymptotic global optimality. Unlike existing approaches, our method enriches the latent space representation, enabling systematic capture of high-frequency dynamics while leveraging the strengths of established PINN frameworks.  

\begin{figure}
    \centering
        \includegraphics[width=0.66\linewidth]{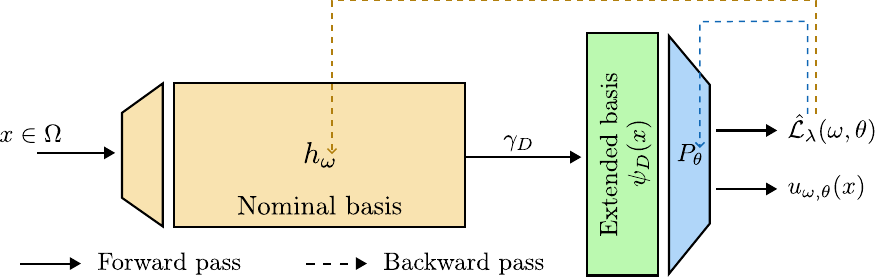}
    \caption{Architecture of IFeF-PINN. The first part (in yellow) generates the nominal basis vectors, which are then extended via $\gamma_D$ generating random Fourier features $\psi_D$ (in green), and a linear combination of the extended basis (in blue) forms the approximated solution $u_{\omega,\theta}$.}
    \label{fig:general_figure}
    \vspace*{-0.3cm} %
\end{figure}

\begin{table}
\centering
\footnotesize
\setlength{\tabcolsep}{4pt} %
\renewcommand{\arraystretch}{1.5}
\caption{Representative methods for approximating solutions to PDE, highlighting application domain, key idea, high-frequency handling (HF), limitations, and optimality.}
\vspace*{-0.2cm}
\label{tab:existing_methods}
\begin{tabular}{p{3cm}|p{1.5cm}|p{3cm}|c|p{4.4cm}}
\toprule
\textbf{Method} & \textbf{Domain} & \textbf{Key Idea} & \textbf{HF} & \textbf{Limitations / Optimality} \\ \midrule
\cite{boyd2001chebyshev,iserles2005efficient} & Linear & Global basis functions (Fourier, Chebyshev) & +++ & Requires regular domains; \textbf{global optimum} \\ 
\cite{weinan2003heterognous} & Multiscale & Separate scales and compute effective dynamics & ++ & Needs clear scale separation, problem-specific; \textbf{local optimum} \\ 
\cite{raissi2017physics} & Generic & NN minimizing physics + data loss & - & Struggles with high-frequency components; \textbf{local optimum} \\ 
\cite{li2021fourier,lu2021learning} & Operator & Learn mapping in Fourier / function space & + & Problem-specific, may require large networks; \textbf{local optimum} \\
\cite{chai2024overcoming,ZhaoEtAl24} & Multiscale & Network architecture or training strategy & ++ & Problem-specific, not robust; \textbf{local optimum} \\ %
\cite{lau2024pinnacle,tang2024adversarial,song2025rl}& General & Adaptive resampling & ++ & Computationally expensive, no convergence guarantees; \textbf{local optimum} \\
\textbf{IFeF-PINN} (this work) & Generic & Iterative training  with extended basis via Fourier features & +++ & Not adapted to resampling, high memory footprint; \textbf{Global optimum} (for linear PDEs) \\ 
\bottomrule
\end{tabular}
\vspace*{-0.75cm} %
\end{table}

In this paper, we propose Iterative PINNs with Fourier-Enhanced Features (IFeF-PINN), a novel iterative two-stage training algorithm that mitigates the spectral bias of PINNs in high-frequency problems while maintaining accurate approximation on standard benchmark PDEs.
Our contributions are threefold: (i) we introduce a flexible building block that augments existing PINNs architectures with improved high-frequency estimation and demonstrate its universal approximation capabilities; (ii) we propose an iterative two-stage training algorithm and prove its convergence properties; and (iii) we validate the approach through extensive simulations on benchmark problems, showing substantial improvements over existing methods.

\section{Background}
\label{sec:background}

\subsection{Physics-Informed Neural Networks}
\label{sec:PINN}

PINNs is a deep learning framework that integrates PDEs into the neural network training via the loss function, enabling data-driven learning with physical constraints \citep{raissi2017physics,karniadakis2021physics}.

Generally, for $n > 0$, let $\Omega \subset \R^n$ be a bounded domain and $\mathcal W$ an appropriate Sobolev space of functions from $\Omega$ to $\R$, we consider linear PDEs of the form
\begin{equation}
\begin{aligned}
\label{equ:general_pde}
&\mathfrak{F}[u](x) = f(x), \quad x \in \Omega, \\
& \mathfrak{B}[u](s) = g(s), \quad s \in \Gamma \subseteq \partial \Omega,
\end{aligned}
\end{equation}
where $u \in \mathcal W$ is the solution, $\mathfrak{F}:\mathcal W \to \mathcal{L}^2(\R^n, \R)$ is the linear differential operator, $f \in \mathcal{L}^2(\Omega, \R)$ is the source term, $\mathfrak{B}:\mathcal W \to \mathcal{Y}(\Gamma)$ is the linear boundary/initial operator, $g \in \mathcal{Y}(\Gamma)$ specifies the boundary/initial conditions, where $\mathcal{Y}(\Gamma)$ denotes the appropriate trace space.
We assume that this problem is well-posed and therefore has a unique solution in $\mathcal W$.

The objective of PINNs is to approximate the solution $u$ with a feedforward neural network $u_\omega$, where $\omega$ denotes the network parameters. \cite{shin2020on} and~\cite{sirignano2018dgm} analyzed consistency in weak formulations under suitable assumptions, motivating the following continuum loss:
\begin{equation}
    \label{eq:pinn_loss}
    \mathfrak{L}_{\lambda}(u_\omega) = \frac{1}{|\Gamma|}\int_{\Gamma} \| g(s) - \mathfrak{B}[u_\omega](s) \|^2 ds + \frac{\lambda}{|\Omega|} \int_{\Omega} \| \mathfrak{F}[u_\omega](x) \|^2 dx,
\end{equation}
with $\lambda > 0$ where, for $A$ a bounded set, $|A|$ denotes its measure. However, this version is not numerically tractable and, in practice, we use the Monte Carlo approximation
\begin{equation}
\label{eq:pinn_sampled_loss}
\hat{\mathfrak{L}}_{\lambda}(u_\omega) = \frac{1}{N_u} \sum_{i=1}^{N_u} \| g(x_u^i) - \mathfrak{B}[u_\omega](x_u^i) \|^2 + \frac{\lambda}{N_f} \sum_{i=1}^{N_f} \|\mathfrak{F}[u_\omega](x_f^i)\|^2,
\end{equation}
where $\{x_u^i\}_{i=1,\dots,N_u}$ and $\{x_f^i\}_{i=1,\dots,N_f}$ are uniformly sampled on $\Gamma$ and $\Omega$, respectively.
Finally, the optimal parameters are found as $\omega^* = \arg\min_{\omega} \hat{\mathfrak{L}}_{\lambda}(u_\omega)$.

\subsection{Random Fourier features}

In this work, we use random Fourier features (RFFs) introduced by~\cite{rahimi2007random} to include high-frequency terms.
Grounded on Bochner's theorem, RFF provides a way to explicitly construct a feature map that approximates a stationary kernel, enabling the scaling of kernel methods to large datasets.

RFF has been used by \cite{tancik2020fourier} to tackle spectral bias. 
The novelty is to extend the input to the neural network using the RFF mapping
\begin{equation}
    \label{eq:rff_mapping}
    \gamma_D(x) = \frac{1}{\sqrt{D}} \begin{bmatrix} \cos(2\pi \mathbf{B}_D x) \\
            \sin(2\pi \mathbf{B}_D x) \end{bmatrix} \in \mathbb{R}^{2D},
\end{equation}
where the entries of the matrix $\mathbf{B}_D \in \mathbb{R}^{D \times n}$ are sampled from a given symmetric distribution. \cite{wang2021eigenvector} adapted this method to PINNs by using $u_\omega$ from the previous section with $2D$ inputs, so that the neural network becomes $u_\omega \circ \gamma_D$. This new architecture can learn to approximate the solution from the enriched inputs. 

\section{Proposed Method}
\label{sec:method}

We leverage the PINNs and RFFs in a novel way. Note first that the PINN training process couples two roles within a single nonconvex objective: (i) hidden layers $h_\omega$ learn a nonlinear feature basis, and (ii) a linear regression operator $P_\theta: h_\omega \mapsto h_\omega^{\top} \theta$ finds the optimal projection coefficients $\theta$ of the approximated solution onto the feature basis, thereby minimizing the loss $\hat{\mathfrak{L}}_\lambda$. This coupling leads to PINN pathologies, where gradients from interior residuals can dominate and suppress boundary terms, and spectral bias drives low-frequency learning first, leaving oscillatory components underfit and slowing convergence on high-frequency modes \citep{wang2021eigenvector,wang2022and}.

To overcome this coupling issue, we approximate the solution $u$ to the PDEs in (\ref{equ:general_pde}) as a linear combination of basis functions.
We thus consider the two problems in isolation: basis generation, which we will denote as the upper-level problem, and linear regression on the basis functions, which we will refer to as the lower-level problem.

\subsection{The upper-level problem: Basis function generation}

The initial step for the basis generation is to follow the classical PINN methodology and train a standard feed-forward neural network with parameters $(\omega, W)$, denoted by
\begin{equation*}
    \tilde{u}_{\omega,W}(x) = W h_\omega(x), \quad \quad x \in \Omega,
\end{equation*}
to minimize $\omega,W \mapsto \hat{\mathfrak{L}}_{\lambda}(\tilde{u}_{\omega,W})$. This is typically accomplished using a gradient-descent numerical scheme, such as ADAM~\citep{kingma2014adam}, or a more complex second-order solver, like L-BFGS~\citep{liu1989limited}. Then, the neural network $h_\omega: \R^n \to \R^p$ generates a basis $h_\omega \in \mathcal{C}(\R, \R^p)$ of the latent space while $W$ is the projection operator.
This initial step serves as a warm-up for the upper-level problem. Note that $\tilde{u}_{\omega,W}$ most likely contains only the low-frequency components of the original solution.
Therefore, the surrogate $\tilde{u}_{\omega,W}$ might be an aliased or steady-state solution of the PDE, and the fit at the boundary points might be poor.

In our approach, the strategy is to apply an RFF mapping to the last hidden layer features $h_\omega$. This upgrades the implicit linear kernel on $h_{\wb}$ to a stationary kernel, such as a radial basis function, in the adaptive feature space. Since $\tilde{u}$ is probably a distorted version of the real solution $u$, the RFF extension might bring higher frequency signals that mitigate the spectral bias.

Concretely, we define $\psi_D(x) = \gamma_D\left(h_{\omega}(x)\right) = \frac{1}{\sqrt{D}} \begin{bmatrix} \cos(2\pi \mathbf{B}_D h_\wb(x)) \\
            \sin(2\pi \mathbf{B}_D h_\wb(x)) \end{bmatrix}$
where $\mathbf{B}_D \in \R^{D\times p}$ is a constant matrix with entries sampled i.i.d. from $\mathcal{N}(0,\sigma^2)$.

\subsection{The lower-level problem: Linear regression}
The linear output layer over $h_{\wb}$ induces a dot-product kernel in feature space, which can limit expressivity and exacerbate spectral bias toward low frequencies. Applying RFF to $h_{\wb}$ equips the adaptive features with a stationary kernel without adding trainable parameters, injecting high-frequency components via random projections. Formally speaking, an approximate solution to the PDE in (\ref{equ:general_pde}) with $\theta\in\R^{2D}$ becomes 
\begin{equation}
\label{equ:u_basis_approx}
      u_{\wb,\theta}(x) = \psi_D(x)^\top \theta, \quad x \in \Omega.
\end{equation}

As we show in Appendix B,
since the operators $\mathfrak{F}$ and $\mathfrak{B}$ are linear,
the loss function $\hat{\mathfrak{L}}_{\lambda}(u_{\omega, \theta})$ is quadratic in $\theta$:
\begin{equation}
\label{equ:Qcb_loss}
\Llower(\theta \ | \ \omega) := \hat{\mathfrak{L}}_{\lambda}(u_{\omega,\theta}) = \tfrac{1}{2}\,\theta^\top Q(\omega) \theta + c(\omega)^\top \theta + b,
\end{equation}
where $Q$ and $c$ collect boundary and interior residual terms.

\begin{proposition}
    \label{prop:QP}
    Assume that $\lambda > 0$ and that the rank condition (3) 
    from Appendix B.1
    is verified. Then $Q$ is positive definite and there is a unique solution to $\argmin_\theta \Llower(\theta \ | \ \omega) = - Q^{-1}(\omega) c(\omega)$.
\end{proposition}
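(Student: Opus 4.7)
The plan is to derive the proposition in three short steps: (i) recall from Appendix~\ref{app:qp-derivation} the explicit form of $Q(\omega)$ and $c(\omega)$, (ii) show that the rank condition forces $Q$ to be strictly positive definite (not merely positive semidefinite), and (iii) conclude by first-order optimality.

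For step (i), I would invoke linearity of $\mathfrak{F}$ and $\mathfrak{B}$: since $u_{\omega,\theta}(x)=\psi_D(x)^\top\theta$ is linear in $\theta$, one has $\mathfrak{F}[u_{\omega,\theta}](x)=\mathfrak{F}[\psi_D](x)^\top\theta$ and likewise $\mathfrak{B}[u_{\omega,\theta}](s)=\mathfrak{B}[\psi_D](s)^\top\theta$. Collecting the sampled evaluations into matrices $M_B\in\mathbb{R}^{N_u\times 2D}$ and $M_F\in\mathbb{R}^{N_f\times 2D}$ with rows $\mathfrak{B}[\psi_D](x_u^i)^\top$ and $\mathfrak{F}[\psi_D](x_f^i)^\top$ respectively, and letting $\vg=(g(x_u^i))_i$, the sampled loss (\ref{eq:pinn_sampled_loss}) expands to $\tfrac{1}{N_u}\|\vg-M_B\theta\|^2+\tfrac{\lambda}{N_f}\|M_F\theta\|^2$, from which one reads off
\[
Q(\omega)=\tfrac{2}{N_u}M_B^{\!\top}M_B+\tfrac{2\lambda}{N_f}M_F^{\!\top}M_F,\qquad c(\omega)=-\tfrac{2}{N_u}M_B^{\!\top}\vg,
\]
as worked out in Appendix~\ref{app:qp-derivation}.

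For step (ii), both $M_B^{\!\top}M_B$ and $M_F^{\!\top}M_F$ are Gram matrices, hence positive semidefinite, and with $\lambda>0$ their weighted sum is as well. Strict positive definiteness amounts to ruling out a nonzero $\theta$ with both $M_B\theta=0$ and $M_F\theta=0$, i.e.\ to requiring that the stacked matrix $[M_B^{\!\top}\ M_F^{\!\top}]^{\!\top}\in\mathbb{R}^{(N_u+N_f)\times 2D}$ has full column rank $2D$. This is exactly what the rank condition from Appendix~\ref{app:proof_rank_cond} asserts, so $Q(\omega)\succ 0$ follows directly.

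For step (iii), $\Llower(\cdot\mid\omega)$ is a strictly convex quadratic, so its unique minimizer is characterized by $\nabla_\theta\Llower=Q(\omega)\theta+c(\omega)=0$, which by invertibility of $Q(\omega)$ gives $\theta^\star=-Q(\omega)^{-1}c(\omega)$. I do not anticipate a genuine obstacle here: the whole argument hinges on the rank hypothesis, and the only subtlety is making sure that linearity of $\mathfrak{F},\mathfrak{B}$ is invoked cleanly so that $Q$ and $c$ genuinely take the Gram/residual form above; the remaining steps are a textbook quadratic-program argument.
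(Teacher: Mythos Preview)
Your proposal is correct and follows essentially the same route as the paper: factor $Q$ through the stacked design matrix, invoke the full-column-rank hypothesis to upgrade positive semidefiniteness to positive definiteness, then read off the minimizer from the first-order condition. One cosmetic slip: in step~(i) you dropped the source term $f$ from the interior residual, so your $c(\omega)$ is missing the $-\tfrac{2\lambda}{N_f}M_F^{\!\top}\vf$ contribution (with $\vf=(f(x_f^i))_i$); this does not affect $Q$ or the validity of the argument, but you should restore it for the formula to match Appendix~\ref{app:qp-derivation}.
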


The proof is given in Appendix B.1.
The application of the RFF mapping in the last hidden layer enables the generation of an arbitrary number of basis functions $\psi_D$ independently of the network's width on which we can leverage quadratic programming to get the unique optimal solution. This would otherwise not be possible because constrained by the basis dimension.

\subsection{The global bi-level problem}

Combining the results from the two previous subsections, we get the following formulation that decouples basis learning (upper-level) from linear regression (lower-level):
\begin{equation}
    \label{eqp:bi_level}
    \begin{aligned}
    \wb^\star(\theta) &= \arg\min_{\wb}\hat{\mathfrak{L}}_{\lambda}(u_{\wb,\theta}) := \arg\min_{\wb}\ \Lupper(\wb \ | \ \theta),\\
    \theta^\star(\wb) &= \arg\min_{\theta}\ \hat{\mathfrak{L}}_{\lambda}(u_{\wb, \theta}) := \arg\min_{\theta}\ \Llower(\theta \ | \ \omega).
    \end{aligned}
\end{equation}

The classical bi-level optimization framework \citep{bard1991some} proposes the following three-step numerical method: (i) sample $w_0$, $\theta_0$ randomly; (ii) solve the upper-level problem $\omega^+ = \omega^\star(\theta_0)$; (iii) solve the lower-level problem $\theta^+ = \theta^{\star}(\omega^+)$. The final parameters $(\omega^+, \theta^+)$ are the optimal solutions to the bi-level optimization.

\begin{wrapfigure}{R}{0.45\textwidth}
    \begin{minipage}{0.45\textwidth}
        \vspace*{-0.85cm}
        \begin{algorithm}[H]
            \caption{IFeF-PINN for linear PDEs}
            \label{alg:blo}
            \begin{adjustwidth}{-1em}{}
            \begin{algorithmic}
            \STATE \textbf{Initialize} network parameter $w_0, \theta_0$ and $B$ 
            \FOR{$k$ from $0$ \TO $N_{epoch}$}
                \STATE Formulate extended RFF basis $\psi_D$
                \STATE \textbf{Lower update:} $\theta_{k+1}  = - Q(\omega_{k})^{-1}  c(\omega_{k})$
                \vspace*{-0.15cm}
                \STATE \textbf{Upper update:} 
                \vspace*{-0.15cm}
                \[
                    \omega_{k+1} = \omega_k - \eta\nabla_\omega \Lupper(\wb_k \ | \ \theta_{k+1})
                \]
                \vspace*{-0.4cm}
            \ENDFOR
            \RETURN $\omega_{N_{epoch}}$, $\theta^\star(\omega_{N_{epoch}})$
        \end{algorithmic}
        \end{adjustwidth}
        \end{algorithm}
        \vspace*{-0.8cm}
    \end{minipage}
\end{wrapfigure}

However, this approach does not consider a warm start and is not particularly adapted to a learning problem. For better approximation capabilities, we propose an iterative scheme.
We warm start using a vanilla PINN pre-training to get an initial value $\omega_0$ for the weights of the basis generator. Then we compute $\theta_{i+1} = \theta^*(w_i)$ before performing a one-step gradient-descent on $\omega_i$ to minimize $\Lupper(\omega_i \ | \ \theta_{i+1})$ to get $\omega_{i+1}$. This leads to Algorithm~\ref{alg:blo}. The convergence of this numerical scheme and the approximation capabilities of the new neural network architecture are studied in the next section.  

\begin{remark}[Relation to deep kernel learning]
    In deep kernel learning, we use a neural network to learn a nonlinear feature transformation, and a Gaussian process is defined over the resulting feature space using a traditional kernel function. This enables learning a flexible, data-driven kernel that combines the expressiveness of deep learning with the uncertainty estimation of Gaussian processes \citep{wilson2016deep}.
    However, to the best of the authors' knowledge, learning a Gaussian process with a nonlinear PDE prior is not yet possible \citep{jidling2017linearly}; we propose a solution in this case.
\end{remark}

\begin{remark}[On the warm start]
Pre-training a standard PINN for several hundred epochs provides initial network parameters for basis generation. This is necessary for homogeneous PDEs to prevent convergence to $u \equiv 0$, since standard initialization yields near-zero outputs that trivially minimize the lower-level problem. For non-homogeneous PDEs, the source term prevents this issue.
    
\end{remark}

\subsection{Extension to nonlinear PDEs}

\begin{wrapfigure}{R}{0.45\textwidth}
    \begin{minipage}{0.45\textwidth}
        \vspace*{-0.85cm}
        \begin{algorithm}[H]
            \caption{IFeF-PINN for nonlinear PDEs}
            \label{alg:nonlinear}
            \begin{adjustwidth}{-1em}{}
            \begin{algorithmic}
            \STATE \textbf{Initialize} network parameter $w_0, \theta_0$ and $B$ 
            \FOR{$k$ from $0$ \TO $N_{epoch}$}
                \STATE Formulate extended RFF basis $\psi_D$
                \STATE \textbf{Lower update:}
                \IF{$k$ mod $N_{lower} = 0$}
                \STATE          
                $\theta_{k+1} \approx \arg\min\limits_{\theta}\ \Llower(\omega_k \mid \theta_k  )$
                \ELSE 
                \STATE $ \theta_{k+1} = \theta_k$
                \ENDIF
                \STATE \textbf{Upper update:} 
                \vspace*{-0.15cm}
                \[
                    \omega_{k+1} = \omega_k - \eta_{\omega}\nabla_\omega \Lupper(\wb_k \ | \ \theta_{k+1})
                \]
                \vspace*{-0.4cm}
            \ENDFOR
            \RETURN $\omega_{N_{epoch}}$, $\theta^\star(\omega_{N_{epoch}})$
        \end{algorithmic}
        \end{adjustwidth}
        \end{algorithm}
        \vspace*{-0.8cm}
    \end{minipage}
\end{wrapfigure}

For nonlinear PDEs, the physics residual term $\frac{\lambda}{N_f} \sum_{i=1}^{N_f} \|\mathfrak{F}[u_{\omega,\theta}](x_f^i)\|^2$ becomes nonlinear in $\theta$, making the lower-level problem $\Llower(\theta \ | \ \omega)$ non-convex and lacking a closed-form solution. We therefore replace the exact solution in Proposition~\ref{prop:QP} with gradient descent to find an approximate local minimizer when the Second-Order Sufficient Condition (SOSC) holds, i.e., when the gradient vanishes and the Hessian is positive definite. The complete update is given in Algorithm~\ref{alg:nonlinear}. For computational efficiency, we update $\theta$ to a local minimizer every $N_{\text{lower}}$ epochs. For initialization, we can either warm start only the network parameters $\omega$ via standard PINN pre-training as in the linear case, or initialize both $\omega$ and $\theta$ jointly via end-to-end training as discussed in Section~\ref{sec:e2e}.

\section{Theoretical Analysis}
\label{sec:theoretical_analysis}

\subsection{Convergence properties of the Bi-level algorithm}

We establish convergence by showing that the optimal lower-level solution $\theta^\star(\omega)$ is Lipschitz continuous with respect to the upper-level parameters $\omega$, which ensures a well-defined Lipschitz hypergradient for gradient descent on the upper level.

\begin{proposition}[Lipschitz Continuity of the Solution Map]
\label{thm:lipschitz_theta}
Let the lower-level problem be a strongly convex QP problem parameterized by $\wb$. Assume that the mappings $\wb \mapsto Q(\wb)$ and $\wb \mapsto c(\wb)$ are locally Lipschitz continuous, and that the smallest eigenvalue of $Q(\wb)$ is uniformly bounded below by $\mu_Q > 0$ on any compact set of $\wb$. Then, the optimal solution map $\theta^\star(\wb)$ is also locally Lipschitz continuous with respect to $\wb$. 
\end{proposition}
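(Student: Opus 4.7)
The plan is to exploit the closed-form expression $\theta^\star(\omega) = -Q(\omega)^{-1} c(\omega)$, which is guaranteed by Proposition~\ref{prop:QP}, and reduce the statement to a standard matrix-inverse perturbation bound. Fix a compact set $\mathcal{W} \subset \R^P$. On $\mathcal{W}$, the local Lipschitz continuity of $c$ implies continuity, and hence a uniform bound $\|c(\omega)\| \le M_c$; the spectral assumption gives $\|Q(\omega)^{-1}\|_2 \le 1/\mu_Q$; and by hypothesis there exist Lipschitz constants $L_Q, L_c$ for $Q$ and $c$ on $\mathcal{W}$.

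For any $\wb_1, \wb_2 \in \mathcal{W}$, I would add and subtract $Q(\wb_1)^{-1} c(\wb_2)$ to split the difference into two manageable pieces:
\[
\theta^\star(\wb_1) - \theta^\star(\wb_2) = -Q(\wb_1)^{-1} \bigl[c(\wb_1)-c(\wb_2)\bigr] - \bigl[Q(\wb_1)^{-1}-Q(\wb_2)^{-1}\bigr] c(\wb_2).
\]
Then I would invoke the classical identity $Q(\wb_1)^{-1}-Q(\wb_2)^{-1} = Q(\wb_1)^{-1}\bigl[Q(\wb_2)-Q(\wb_1)\bigr]Q(\wb_2)^{-1}$, take operator norms, and assemble the four bounds above to obtain
\[
\norm{\theta^\star(\wb_1)-\theta^\star(\wb_2)} \;\le\; \frac{L_c}{\mu_Q}\,\norm{\wb_1-\wb_2} \;+\; \frac{M_c\, L_Q}{\mu_Q^2}\,\norm{\wb_1-\wb_2},
\]
so the claim holds with $K = L_c/\mu_Q + M_c L_Q/\mu_Q^2$.

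There is no real obstacle: every step is a direct consequence of the hypotheses, and the only subtlety is verifying that $M_c < \infty$, which follows automatically because a locally Lipschitz function is continuous on the compact set $\mathcal{W}$. An alternative route would be to apply the implicit function theorem to the first-order optimality condition $Q(\wb)\theta + c(\wb)=0$; the partial derivative with respect to $\theta$ is $Q(\wb)$, which is uniformly invertible by assumption, so $\theta^\star$ inherits the regularity of $Q$ and $c$. I would prefer the direct perturbation argument since it yields an explicit Lipschitz constant in terms of quantities that can be tracked through the subsequent hypergradient analysis of Algorithm~\ref{alg:blo}.
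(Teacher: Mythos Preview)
Your proposal is correct and follows essentially the same argument as the paper: both use the closed-form $\theta^\star(\omega)=-Q(\omega)^{-1}c(\omega)$, add and subtract $Q(\omega_1)^{-1}c(\omega_2)$, apply the identity $A^{-1}-B^{-1}=A^{-1}(B-A)B^{-1}$, and combine the bounds $\|Q^{-1}\|\le 1/\mu_Q$, $\|c\|\le M_c$, and the Lipschitz constants $L_Q,L_c$ to obtain the identical constant $K=L_c/\mu_Q + M_c L_Q/\mu_Q^2$.
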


The detailed proof is provided in Appendix C.2.%
This also holds in the nonlinear PDE cases, when the SOSC is satisfied, the local minimizer $\theta^{\star}(\omega)$ retains Lipschitz continuity and differentiability in a neighborhood of $\omega$. Consequently, the hypergradient is L-smooth, which we leverage in our convergence analysis.

\begin{theorem}[Convergence to a stationary point]
\label{thm:convergence}
Assume that 1) the functions $Q$ and $c$ are continuously differentiable with respect to $\wb$, the upper-level loss $\Lupper$ is continuously differentiable with respect to both $\theta$ and $\wb$; 2) The lower-level problem is $\mu$-strongly convex; 3) the objective function $\Lupper(\cdot \ | \ \theta)$ is bounded below and its hypergradient is L-smooth.

Then, the sequence of iterates $\{\wb_k\}_{k=0}^{\infty}$ generated by the gradient descent algorithm with a constant step size $\eta \in (0, 2/L)$ converges to a stationary point of $\Lupper(\cdot \ | \ \theta)$.

\end{theorem}

The assumptions made are classical in learning problems and are a direct consequence of the structure of the bi-level framework. A formula for the hypergradient is derived via the Implicit Function Theorem in Appendix C.1, %
showing it as a composition of smooth functions. Its Lipschitz continuity is then guaranteed by the Lipschitz continuity 
of the solution map $\theta^\star$ established in Proposition~\ref{thm:lipschitz_theta}.  

\subsection{Universal approximation capabilities}
\label{sec:projection_error}
To analyze the expressiveness of the RFF-augmented features, we show that the hypothesis class is not less expressive than linear readouts over the last hidden layer features. The necessary function spaces for this analysis are defined with comprehensive foundational definitions and proofs in Appendix D.%

\begin{definition}
\label{def:nn_space}
The \textbf{feature space} $\mathcal{H}_{f}$ and the \textbf{composite RFF function space} $\mathcal{H}_{\mathrm{RFF}}$ are defined as:
\begin{equation}
    \label{eq:RFF_space}
    \mathcal{H}_{f} := \left\{ g \ | \ g(x) = h_{\omega}(x)^{\top} \theta, \ \theta \in \mathbb{R}^p \right\}, \quad \mathcal{H}_{\mathrm{RFF}} := \left\{ g \ | \ g(x) =  \psi_D(x)^{\top} \theta, \ \theta \in \mathbb{R}^{2D} \right\},
\end{equation}
where $\psi_D = \gamma_D \circ h_{\omega}$ denotes the vector of composite RFF features defined in Equation~\ref{eq:rff_mapping}.
\end{definition}

We will show that $\overline{\mathcal{H}_{\mathrm{RFF}}}$ strictly contains $\mathcal{H}_{f}$, and thus defines a more expressive hypothesis class. The argument constructs a bridge between the two spaces using a reproducing kernel Hilbert space. 

\begin{theorem}
\label{thm:projection_error}
Let $f$ be any target function in $\mathcal{L}^2(\Omega,\R)$. The projection error (see Definition 3 %
in D.1) %
achievable by the composite RFF Function Space $\mathcal{H}_{\mathrm{RFF}}$ is no greater than the projection error achieved by the original Feature Space $\mathcal{H}_{f}$ when the number of RFF features $D$ goes to infinity. %

\end{theorem}

The proof is given in Appendix D.2. %
This result establishes a powerful theoretical assurance that RFF embedding offers better approximation capabilities. 
Theorem~\ref{thm:projection_error} yields the universal approximation corollary presented below, the proof of which is given in Appendix D.2.1%

\begin{corollary}[Universal approximation]
    The projection error of the solution $u$ to \eqref{equ:general_pde} onto $\mathcal{H}_{\mathrm{RFF}}$ can be made as small as desired, provided enough neurons and RFF features $D$.
\end{corollary}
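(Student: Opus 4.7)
The plan is to chain the classical universal approximation theorem for feedforward neural networks together with Theorem~\ref{thm:projection_error}, using the latter as the bridge from the feature-space hypothesis class $\mathcal{H}_{f}$ to the RFF-augmented class $\mathcal{H}_{\mathrm{RFF}}$.

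First I would fix an arbitrary target accuracy $\varepsilon > 0$. Since the solution $u$ lies in $\mathcal{W} \subset \mathcal{L}^2(\Omega, \R)$, the universal approximation theorem (e.g.\ the Hornik--Cybenko result for non-polynomial activations) applies to the feature network $h_{\omega}$. By enlarging the width $p$ of the last hidden layer and tuning the weights $\omega$, the family $\mathcal{H}_{f} = \{h_{\omega}^{\top}\theta : \theta \in \R^{p}\}$, taken as a union over admissible $(p,\omega)$, is dense in $\mathcal{L}^{2}(\Omega,\R)$. Consequently there exist a width $p^{\star}$ and weights $\omega^{\star}$ such that
\[
\inf_{g \in \mathcal{H}_{f}} \norm{u - g}_{L^{2}} < \tfrac{\varepsilon}{2}.
\]

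Next, holding $h_{\omega^{\star}}$ fixed, I would invoke Theorem~\ref{thm:projection_error}: the RFF-augmented hypothesis class is asymptotically at least as expressive as $\mathcal{H}_{f}$, so as $D \to \infty$ the projection error onto $\mathcal{H}_{\mathrm{RFF}}$ is bounded above by the projection error onto $\mathcal{H}_{f}$. Selecting $D$ large enough that the RFF projection error is within $\varepsilon/2$ of its limit yields
\[
\inf_{g \in \mathcal{H}_{\mathrm{RFF}}} \norm{u - g}_{L^{2}} \leq \inf_{g \in \mathcal{H}_{f}} \norm{u - g}_{L^{2}} + \tfrac{\varepsilon}{2} < \varepsilon,
\]
which is exactly the claim of the corollary.

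The main obstacle is the randomness inherent in the RFF construction: the matrix $\mathbf{B}_{D}$ entering $\gamma_{D}$ is sampled, so the bound of Theorem~\ref{thm:projection_error} is most naturally interpreted in expectation or with high probability. The corollary should therefore be read as an existence statement, and the final step would be a concentration argument (or simply taking $D$ large enough so that the RFF kernel approximation is tight with overwhelming probability) to guarantee a realization of $\mathbf{B}_{D}$ for which the inequality above holds deterministically. A minor secondary point is to verify that the activation used in $h_{\omega}$ satisfies the non-polynomiality hypothesis of the universal approximation theorem, which is met by all standard choices employed in PINN architectures.
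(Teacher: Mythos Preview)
Your proposal is correct and follows essentially the same approach as the paper: invoke Hornik's universal approximation theorem to make the projection error onto $\mathcal{H}_{f}$ smaller than $\varepsilon/2$, then apply Theorem~\ref{thm:projection_error} to transfer this bound to $\mathcal{H}_{\mathrm{RFF}}$ for $D$ sufficiently large. The paper's version phrases the argument via an explicit intermediate approximant $\tilde{u}_{\omega_*,W_*}$ and a triangle-inequality-type decomposition rather than comparing projection errors directly, but the two-step structure is identical, and your remarks on the randomness of $\mathbf{B}_D$ and the activation hypothesis are valid refinements the paper does not spell out.
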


\section{Related Work}
\label{sec:sota}

\paragraph{Weight-balancing strategies} These methods adapt the physics weight $\lambda$ in \eqref{eq:pinn_sampled_loss} during training. For instance, \citep{wang2021understanding} dynamically updates $\lambda$ to balance the gradients of data and physics losses, while the NTK framework \citep{jacot2018neural,krishnapriyan2021characterizing} enforces equal decay rates, theoretically recovering high-frequency solutions. Primal–dual methods \citep{goemans1997primal,barreau2025accuracy} instead compute $\lambda$ from the PDE residual. Although simple to implement, these approaches offer weak convergence guarantees and remain tied to single-level optimization. Nonetheless, they are complementary to our framework and could be integrated as weight-balancing strategies within the upper-level problem.

\paragraph{Resampling strategies} A second line of work reduces the gap between the true loss $\mathfrak{L}_{\lambda}$ and its sampled counterpart $\hat{\mathfrak{L}}_{\lambda}$. Examples include NTK-informed sampling \citep{lau2024pinnacle}, adversarial sampling \citep{tang2024adversarial}, and reinforcement learning \citep{song2025rl}. While effective in reducing approximation error, these methods do not explicitly target spectral bias, which is the focus of our proposed method.

\paragraph{Curriculum learning strategies} Finally, new architectures and training schedules aim to better capture high-frequency components. Attention mechanisms \citep{sirignano2018dgm}, multi-stage networks \citep{AmandaStacked,waheed2022kronecker,chai2024overcoming,mustajab2024physics,eshkofti2025vanishing,wang2024multi}, or finite-basis approximation \citep{moseley2023finite} have shown improved multi-scale resolution. However, their complexity often makes training slow and delicate, and they still lack dedicated optimization algorithms.

\section{Numerical Experiments}
\label{sec:experiments}

\paragraph{Objective.}
In this section, we describe comprehensive experiments that establish four main advantages of IFeF-PINN. First, improved approximation over PINNs and SOTA variants on low-frequency PDEs. Second, higher accuracy on high-frequency and multi-scale linear PDEs, where standard PINNs typically show failure modes. Third, our framework exhibits strong generalization capabilities when integrated with advanced PINN variants. Finally, a spectrum analysis experiment demonstrates that our proposed method improves the network fitting accuracy for high-frequency signals. %

\paragraph{Experiment setup.}
We will use four PDEs, namely the 2D Helmholtz equation (low and high frequency), 1D convection equation (low and high frequency), 1D convection-diffusion equation, and the viscous Burgers' equation.
The baseline methods are Vanilla PINNs, NTK~\citep{wang2022and}, PINNsformer~\citep{ZhaoEtAl24}, and Physics-Informed Gaussians (PIG)~\citep{KangEtAl25}, keeping their default settings for a fair comparison. Additional experimental comparisons with Multiple Fourier Features (MFF)~\citep{wang2021eigenvector} are provided in Appendix G.1. %
For simplicity, we set $\lambda = 0.01$ for the Vanilla PINNs in Equation~\ref{eq:pinn_sampled_loss}. Detailed hyperparameters for our proposed methods are in Appendix E. %
For low-frequency 2D Helmholtz and low-frequency 1D convection equations, we adopt the uniform sampling strategy settings of~\cite{ZhaoEtAl24}. For the viscous Burgers' equation, we follow the setup of~\cite{raissi2019physics}. For the high-frequency Helmholtz equation, we employ Latin hypercube sampling \citep{mckay2000comparison} to improve domain coverage. 
We evaluate two variants of our framework: IFeF (Vanilla training) and IFeF-PD (primal-dual weight-balancing proposed by \cite{barreau2025accuracy}). PDE definitions, datasets, and network architectures are provided in Appendix F. %
We measure the relative $L^2$-error after convergence, defined as $\frac{\lVert u_\text{pred} - u_\text{real} \rVert_2}{\lVert u_\text{real} \rVert_2}$. Each method is run five times with independent random seeds, with the best predictions for each approach. All models are implemented in PyTorch and trained on a single NVIDIA GeForce RTX 4090 GPU. The code for all benchmarks is available at \url{https://github.com/CyberAltrumi/IFeF-PINN}. Computational aspects are evaluated in Appendix G.2.

\subsection{Results on benchmark PDEs}
We begin with three popular low-frequency benchmark PDEs: 2D Helmholtz equation, 1D convection equation, and the viscous Burgers' equation. Figure~\ref{fig:lowfi_boxplot} summarizes relative $L^2$-errors across baseline methods; box plots display medians and IQRs, and red diamonds denote means. Additional prediction and absolute error maps are provided in Appendix G. %
\begin{figure}[htbp]
    \centering
        \includegraphics[width=0.99\linewidth]{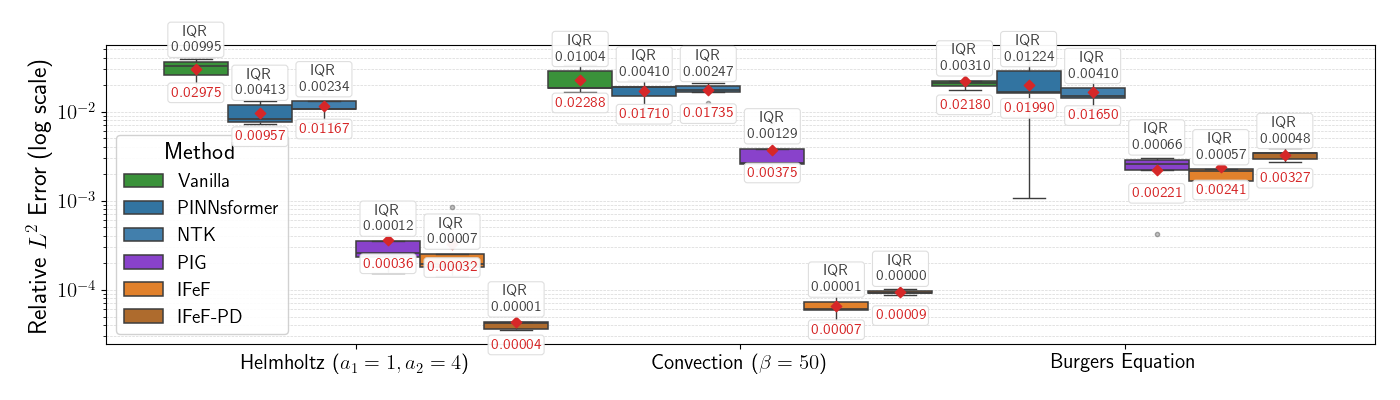}
    \caption{Box plot of relative $L^2$-errors (log10 scale) for all methods on three low-frequency benchmarks with median, inter-quartile range (IQR), and mean (red diamonds).}
    \label{fig:lowfi_boxplot}
    \vspace*{-0.5cm}
\end{figure}

Across all these problems, our proposed method attains the lowest median errors with reduced variability. On Helmholtz, IFeF-PD achieves the best relative $L^2$ error of $3.5 \times10^{-5}$. On convection, IFeF achieves the best error of $4.3 \times10^{-5}$. Even in the nonlinear case of Burgers' equation, IFeF obtains the lowest median error.
In addition, we conducted an ablation study where we discarded the RFF basis extension but performed a similar iterative two-step optimization process, obtaining results that were similar but slightly better than those of the Vanilla PINN ($1.4923\times10^{-2}$ relative $L^2$-error) on the low-frequency convection problem.
Figure~\ref{fig:Helmholtz_lowfi} presents the predictions for the low-frequency 2D Helmholtz case. On a logarithmic scale, the gap between IFeF-PINN and other methods is consistent with the box plot summaries.
These results highlight the strong approximation capability of the proposed method, especially for linear equations, underscoring its robustness for solving diverse PDEs.

\begin{figure}[htbp]
    \centering
        \includegraphics[width=0.99\linewidth]{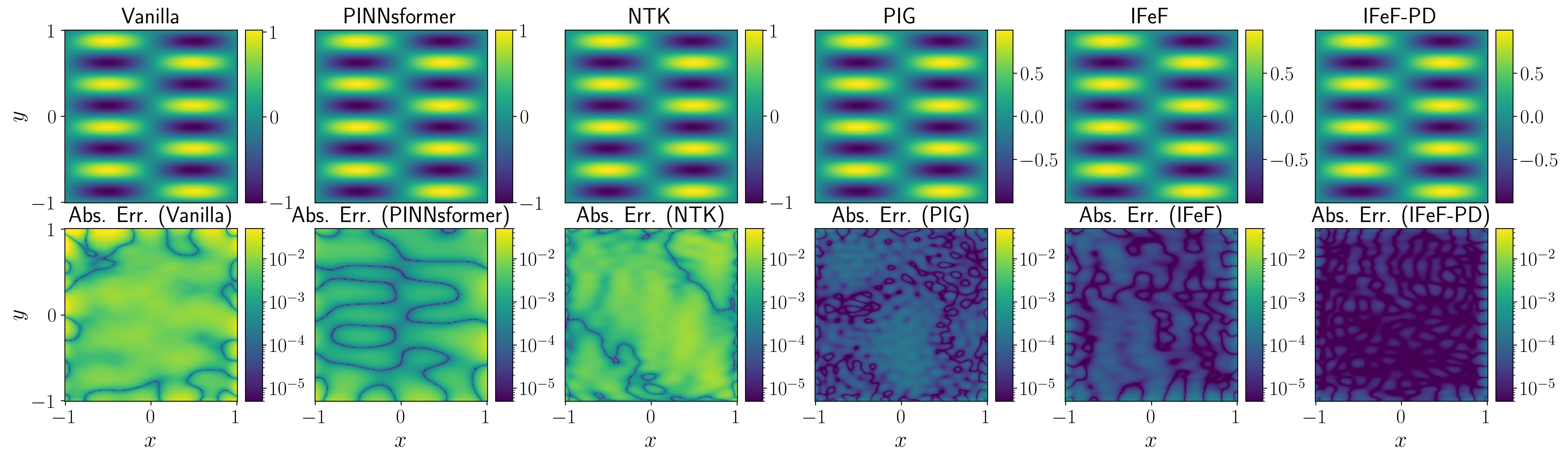}
    \caption{Low-frequency Helmholtz equation prediction solution (up) and absolute error on a log10 scale (bottom) of baseline methods.}
    \label{fig:Helmholtz_lowfi}
    \vspace*{-0.5cm}
\end{figure}

\subsection{Mitigating the spectral bias}
To evaluate challenging cases of spectral bias, we study the failure modes of PINNs on high-frequency and multi-scale PDEs, where vanilla PINNs typically struggle to learn rapidly oscillatory or widely separated frequency components.
In particular, we study the high-frequency Helmholtz and convection equations, as well as a multi-scale convection-diffusion equation. Table~\ref{tab:err_hifi} presents the mean and standard deviation of the relative $L^2$-errors over baselines applied to these problems. Additional prediction and absolute error maps are provided in Appendix G.%

\begin{table}[htbp]
  \centering
  \setlength{\tabcolsep}{10pt}
  \renewcommand{\arraystretch}{1.15}
  \begin{tabular}{l c c c }
    \toprule
    Baseline & \shortstack{Helmholtz \\ ($a_1=a_2=100$)} & \shortstack{Convection \\ ($\beta=200$)} & \shortstack{Convection-Diffusion \\ ($k_\text{low} = 4\pi$, $k_\text{high} = 60\pi$)} \\
    \midrule
    Vanilla     & - & 0.9024 (0.0239) & 0.0501 (0.0030) \\
    PINNsformer  & - & 1.2278 (0.2010)  & 0.0525 (0.0001) \\
    NTK        & - & 0.8685 (0.0318) &  0.0526 (0.0001) \\
    PIG   & 1.6884 (0.2775) & 1.0009 (0.0003) & 0.0560 (0.0010)\\
    IFeF       & 0.0156 (0.0055)  & 0.0027 (0.0010) & \textbf{0.0009} (0.0003)\\
    IFeF-PD       & \textbf{0.0092} (0.0031) & \textbf{0.0025} (0.0005)  & 0.0010 (0.0002)\\
    \bottomrule
  \end{tabular}
  \caption{Average relative $L^2$-error with corresponding standard deviation for each baseline on three high-frequency PDEs. A dash '-' denotes that the baseline failed to converge.}
  \label{tab:err_hifi}
\end{table}

\begin{figure}[htbp]
    \centering
        \includegraphics[width=0.99\linewidth]{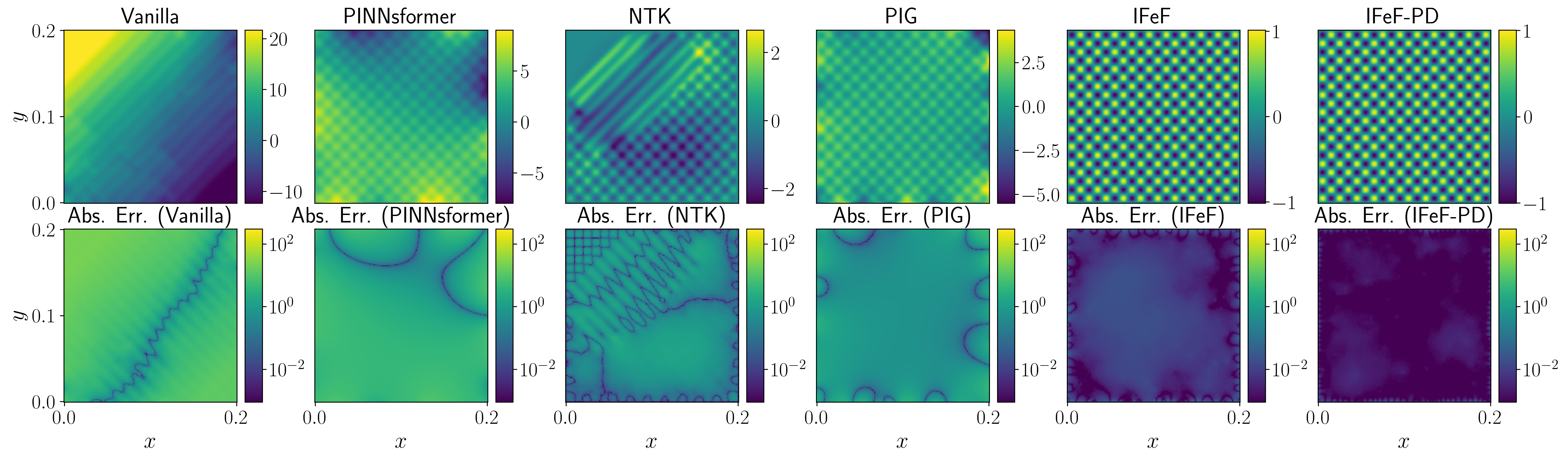}
    \caption{High-frequency Helmholtz equation prediction solution (up) and absolute error in log scale (bottom) of baseline methods.}
    \label{fig:Helmholtz_hifi}
    \vspace*{-0.5cm}
\end{figure}

Figure~\ref{fig:Helmholtz_hifi} depicts the high-frequency Helmholtz solutions and the corresponding log-scale absolute errors. 
In the considered scenarios, all baselines exhibit clear failure modes.
We also conducted a similar ablation study as described in the previous section, removing the RFF basis extension, and the training did not converge for both the high-frequency Helmholtz and convection equations.
In contrast, the proposed IFeF-PINN method effectively mitigates the spectral bias of neural networks.
Moreover, when combined with the primal-dual method to adaptively balance the physics-based loss, our method achieves accurate solutions even under very high frequencies, which illustrates the flexibility of the proposed framework in incorporating advanced learning methods.
A similar result holds for the multi-scale convection-diffusion equation in Figure 4 %
in Appendix G, %
clearly showing that only IFeF-PINN succeeds in learning both low and high frequency components of the solution. In contrast, all baselines suffer from the spectral bias failure mode, where models prioritize learning low-frequency components and tend to ignore the high-frequency components. 

\subsection{Spectrum Analysis}
To quantitatively demonstrate our method's ability to mitigate spectral bias, we employ the fast Fourier transform to analyze the frequency-domain distribution of the network's prediction.
We conduct a spectrum analysis similar to~\cite{rahaman2019spectral}, designing a challenging multi-scale convection equation with an initial condition composed of a superposition of ten sinusoids of different frequencies and unit amplitude. More details of the setup are in Appendix F.2.%
\begin{wrapfigure}{r}{0.45\textwidth}
    \centering
    \includegraphics[width=1\linewidth]{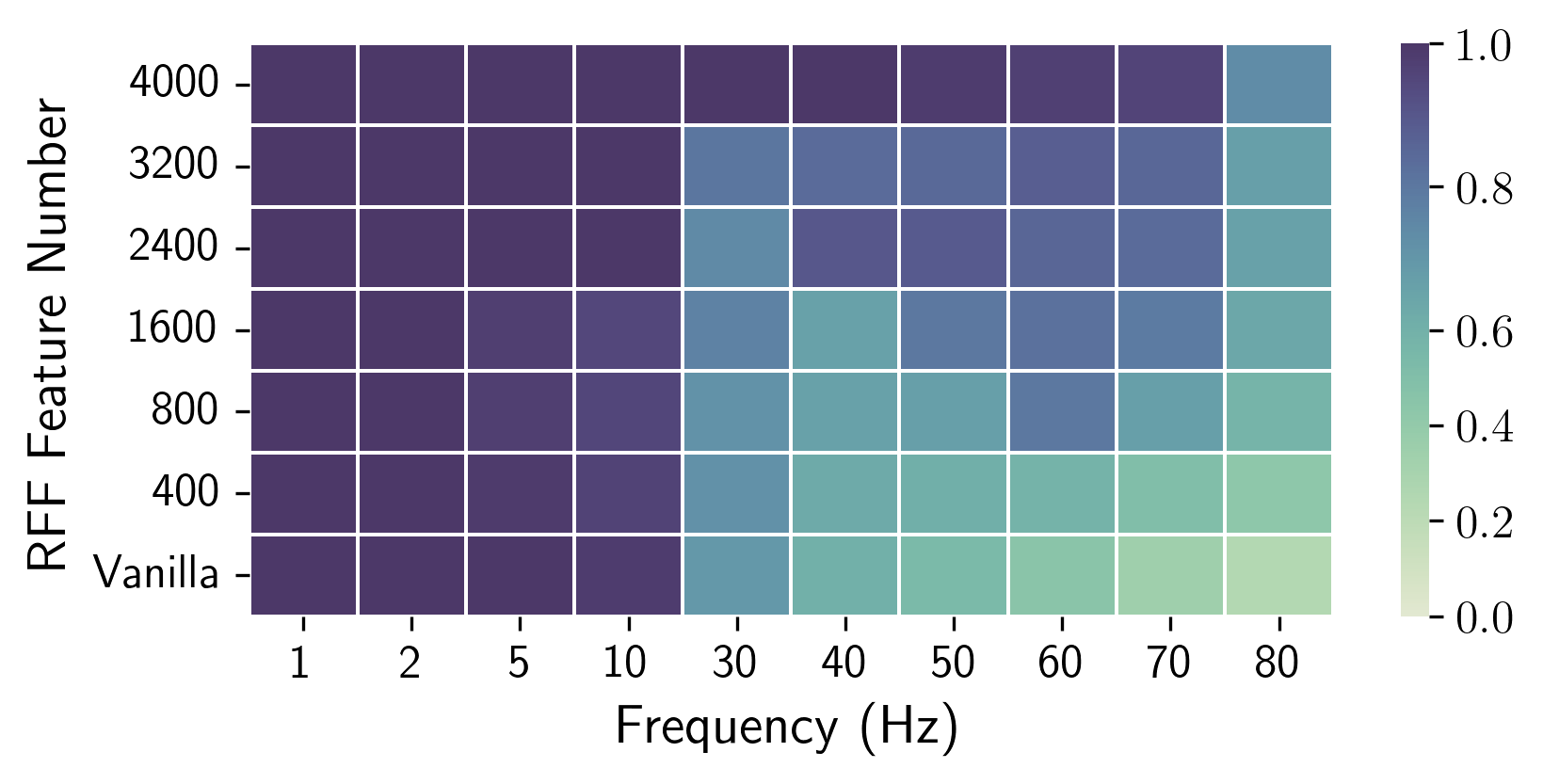}
    \caption{Prediction of the network spectrum with an increasing number of Fourier features. The x-axis represents frequency, and the colorbar shows the normalized magnitude of the predicted solution at $t=0$. The colorbar is scaled accordingly from 0 to 1.}
    \label{fig:spectrum}
    \vspace*{-0.6cm}
\end{wrapfigure}

During analysis, we compare the performance of Vanilla PINNs against models where the basis is extended with a varying number of random Fourier features and carry out a one-step solution of the lower-level objective in Equation~\ref{equ:Qcb_loss}. No additional training is performed for the upper-level problem.

We compute the magnitude of their discrete Fourier transform at frequencies $k_i$, denoted as $|\tilde{f}_{k_i}|$.
Figure~\ref{fig:spectrum} presents the average normalized magnitudes $\frac{|\tilde{f}{k_i}|}{A_i}$ over five independent runs. The results clearly illustrate the spectral bias of Vanilla PINNs, which struggle to accurately capture high-frequency components. In contrast, by extending the network's basis through RFF, the network can fit high-frequency signals much more effectively, even without the subsequent bi-level training procedure of IFeF-PINN. Furthermore, we observe that increasing the number of random features enhances the network’s ability to approximate high-frequency components, confirming the effectiveness of our basis extension strategy.

\subsection{Ablation studies}
In this section, we present experiments to demonstrate the effects of two-stage training in IFeF-PINN and the number of Fourier-enhanced features and the Gaussian sampling parameter $\sigma$.

\subsubsection{End-to-end training}
\label{sec:e2e}
To validate the necessity of two-stage training in IFeF-PINN, we conduct an end-to-end ablation where both network parameters $\omega$ and coefficients $\theta$ are jointly optimized. We keep the approximation in Equation~\ref{equ:u_basis_approx} but incorporate $\theta$ as learnable parameters alongside $\omega$, and directly minimize $\hat{\mathfrak{L}}_{\lambda}(u_{\omega, \theta})$ without the two-stage training. Unlike IFeF-PINN where $\theta$ is always optimal under current features $\psi_D(x)$, $\theta$ is randomly initialized and updated simultaneously with $\omega$, losing the optimality guarantee. Table~\ref{tab:e2e_training} presents the results for low- and high-frequency Helmholtz and Burgers' equations. This ablation validates the necessity of two-stage training, as IFeF-PINN significantly outperforms end-to-end training in linear PDEs through guaranteed lower-level optimality of $\theta$, while showing modest improvements in nonlinear PDEs where the lower-level becomes non-convex.

\begin{table}[htbp]
  \centering
  \setlength{\tabcolsep}{10pt}
  \renewcommand{\arraystretch}{1.15}
  \begin{tabular}{l c c c }
    \toprule
    Ablation & \shortstack{Helmholtz \\ ($a_1=1, a_2=4$)}  & \shortstack{Helmholtz \\ ($a_1=a_2=100$)}  & \shortstack{Viscous Burgers \\ ($\nu = \frac{0.01}{\pi} $)} \\
    \midrule
     End-to-End      &$0.0088 (0.0006)$ & -  & $0.0049 (0.0009)$  \\
     IFeF      &$0.0003(0.0003) $ & $0.0156(0.0055) $   & $0.0024(0.0011) $  \\
     IFeF-PD      &$0.00005(0.00002) $ & $0.0092(0.0031) $   & $0.0033(0.0004) $  \\
    \bottomrule
  \end{tabular}
  \caption{Average relative $L^2$-error with corresponding standard deviation for end-to-end training and IFeF-PINN on three benchmarks. A dash '-' denotes that the baseline failed to converge.}
  \label{tab:e2e_training}
  \vspace*{-0.5cm}
\end{table}

\subsubsection{Hyperparameter ablation}
We conduct an ablation on two key hyperparameters in IFeF-PINN: the number of Fourier features $D$ and the Gaussian sampling parameter $\sigma$. We evaluate their impact on performance using the low- and high-frequency Helmholtz equations, with results shown in Table~\ref{tab:hyperparameter_ablation}. The ablation shows that too few features reduce expressivity while excessive features cause overfitting and may break the rank condition discussed in Appendix B.1. For $\sigma$, larger values are essential for high-frequency problems discussed in~\cite{tancik2020fourier, wang2021eigenvector}. Low-frequency problems are robust to both hyperparameters, while high-frequency problems are sensitive, especially to $\sigma$.

\begin{table}[htbp]
  \centering
  \small
  \setlength{\tabcolsep}{8pt}
  \renewcommand{\arraystretch}{1.1}
  \begin{tabular}{l c c c c c}
    \toprule
    \multicolumn{6}{c}{Helmholtz ($a_1=1, a_2=4$)} \\
    \midrule
    $D$ ($\sigma=1$) & 100 & 400 & 800 & 1200 & 3000  \\
    Rel. $L^2$ error & $5.5\times 10^{-4}$ & $\mathbf{2.1\times 10^{-4}}$ & $3.2\times 10^{-4}$ & $5.7\times 10^{-4}$ & $4.5\times 10^{-4}$ \\
    \midrule
    $\sigma$ ($D=800$) & 2 & 1 & 0.5 & 0.2 & 0.1  \\
    Rel. $L^2$ error & $4.0\times 10^{-4}$ & $\mathbf{3.2\times 10^{-4}}$ & $5.5\times 10^{-4}$ & $3.3\times 10^{-4}$ & $1.5\times 10^{-3}$ \\
    \midrule
    \multicolumn{6}{c}{Helmholtz ($a_1=a_2=100$)} \\
    \midrule
    $D$ ($\sigma=1$) & 800 & 1200 & 1600 & 2400 & 3000  \\
    Rel. $L^2$ error & $7.11\times 10^{-2}$ & $5.40\times 10^{-2}$ & $3.09\times 10^{-2}$ & $\mathbf{1.56\times 10^{-2}}$ & $2.22\times 10^{-2}$  \\
    \midrule
    $\sigma$ ($D=2400$) & 20 & 10 & 5 & 1 & 0.2 \\
    Rel. $L^2$ error & $4.6\times 10^{-3}$ & $\mathbf{3.0\times 10^{-3}}$ & $5.7\times 10^{-3}$ & $1.56\times 10^{-2}$ & $1.05\times 10^{-1}$ \\
    \bottomrule
  \end{tabular}
  \caption{Average relative $L^2$-error for hyperparameter ablation for $D$ and $\sigma$ on Helmholtz equations.}
  \label{tab:hyperparameter_ablation}
  \vspace*{-0.5cm}
\end{table}

\section{Conclusion}
\label{sec:conclusion}
In this paper, we introduce IFeF-PINN, a novel iterative training method for Fourier-enhanced Features PINNs. By augmenting the network with random Fourier features mapping as a basis extension with the bi-level problem, IFeF-PINN mitigates the spectral bias problem of standard PINNs when capturing the high-frequency and multi-scale components during training. Experimental results demonstrate that IFeF-PINN consistently outperforms advanced baselines across various scenarios, including popular low-frequency benchmarks and handling high-frequency and multi-scale PDEs. Furthermore, it has strong flexibility when integrating with different training strategies for PINNs.

Despite its strengths, IFeF-PINN faces challenges when extended to nonlinear PDEs. For nonlinear PDEs, the lower-level problem becomes nonconvex, precluding a one-step solve and requiring iterative two-stage gradient descent updates that can stall in local minima. Advancing principled bi-level optimization techniques to better handle the nonlinear lower-level problem remains a promising direction for future work.

\section{Acknowledgments}
This work was partially supported by the Wallenberg AI, Autonomous Systems and Software Program (WASP), funded by the Knut and Alice Wallenberg Foundation. It was further supported by the Swedish Research Council through the Distinguished Professor Grant 2017-01078, as well as by the Wallenberg Scholar Grant from the Knut and Alice Wallenberg Foundation. The authors also gratefully acknowledge the support of Digital Futures.

\bibliography{iclr2026_conference}
\bibliographystyle{iclr2026_conference}

\clearpage
\appendix

\section{Contents}
\label{app:app_contents}
We organize this supplementary document as follows:
\begin{itemize}
    \item Section~\ref{app:qp-derivation} provides the formulation for the QP problem and proof of Proposition 1.
    \item Section~\ref{app:convergence_analysis} provides the proofs for the convergence analysis of our method.
    \item Section~\ref{app:proofs_projection} provides the proofs for the projection error analysis of our method.
    \item Section~\ref{app:hyper_setting} details the hyperparameter settings of our method.
    \item Section~\ref{app:pdes_setup} describes the experimental setup for all PDEs, models, and datasets.
    \item Section~\ref{app:additional_results} presents additional results and computational costs considered in our numerical experiments.
\end{itemize}

\section{QP formulation details and proof of Proposition 1}%
\label{app:qp-derivation}

Let $X_u = (x_u^1, \dots x_u^{N_u}) \in \mathbb{R}^{N_u \times n}$ be boundary collocation points and $X_f = (x_f^1, \dots, x_f^{N_f}) \in \mathbb{R}^{N_f \times n}$ be interior collocation points. We define
\begin{itemize}
    \item $B_u(\omega) \in \R^{N_u \times 2D}$ with rows $\mathfrak{B}[\psi \circ h_\omega](x_u^i)^{\top} \in \mathbb{R}^{n}$ for $i = 1, \dots, N_u$ as boundary values;
    \item $G_u \in \R^{N_u}$ with rows $g(x_u^i)^{\top} \in \mathbb{R}^{n}$ for $i = 1, \dots, N_u$ as boundary measurement;
    \item $R_f(\omega) \in \R^{N_f \times 2D}$ with rows $\mathfrak{F}[\psi \circ h_{\omega}](x_f^i)^{\top}$ for $i = 1, \dots, N_f$ as residual values;
    \item $F_f \in \R^{N_f}$ with rows $f(x_f^i)^{\top}$ for $i = 1, \dots, N_f$ as source terms.
\end{itemize}

Note that the linearity of the $\mathfrak{B}$ and $\mathfrak{F}$ operators leads to $\mathfrak{B}[u_{\omega,\theta}](X_u) = B_u \theta$ and $\mathfrak{F}[u_{\omega,\theta}](X_f) = R_f \theta$. Consequently, using the standard PIML loss with boundary and PDE residual terms, we get:
\begin{equation}
    \hat{\mathfrak{L}}_{\lambda}(u_{\omega,\theta})
    = \frac{1}{N_u} \bigl\| B_u(\omega) \theta - G_u \bigr\|^2
    + \frac{\lambda}{N_f}\bigl\| R_f(\omega) \theta - F_f \bigr\|^2.
\end{equation}
The loss expands to the quadratic form
\begin{multline}
    \label{equ:qplossfun-appendix}
    \Llower(\theta \ | \ \omega) = \frac{1}{2}\theta^\top \Bigl(\frac{2}{N_u}B_u^\top B_u + \frac{2\lambda_{\mathrm{LL}}}{N_f}R_f^\top R_f\Bigr)\theta + \Bigl(-\frac{2}{N_u}B_u^\top G_u - \frac{2\lambda_{\mathrm{LL}}}{N_f}R_f^\top F_f\Bigr)^\top \theta \\
    + \frac{1}{N_u} G_u^\top G_u + \frac{\lambda_{\mathrm{LL}}}{N_f} F_f^\top F_f.
\end{multline}
Identifying
\begin{equation*}
    Q(\omega) = \frac{2}{N_u} B_u(\omega)^\top B_u(\omega) + \frac{2\lambda_{\mathrm{LL}}}{N_f} R_f(\omega)^\top R_f(\omega), \quad 
    c(\omega) = -\frac{2}{N_u} B_u(\omega)^\top G_u - \frac{2\lambda_{\mathrm{LL}}}{N_f} R_f(\omega)^\top F_f,
\end{equation*}
leads to $\Llower(\theta \ | \ \omega) = \tfrac{1}{2}\theta^\top Q(\omega) \theta + c(\omega)^\top \theta + b,$ where $b$ is a constant term and $\lambda_{\mathrm{LL}}$ is the physics weight used in the lower-level problem.

\subsection{Analysis of the Rank Condition and Regularization of Proposition 1}%
\label{app:proof_rank_cond}

The positive semi-definiteness of $Q(\omega)$ follows from the factorization
$Q(\omega) = M(\omega)^\top M(\omega) \in \mathbb{R}^{2D\times 2D}$, with the stacked design matrices $M(\omega)$ as follows:
\[ M(\omega) =  \begin{pmatrix} \sqrt{\frac{2}{N_u}}B_u(\omega) \\   \sqrt{\frac{2\lambda}{N_f}}R_f(\omega) \end{pmatrix},\]
where $\lambda > 0$ is always ensured. Then we have $\operatorname{rank}(Q(\omega)) = \operatorname{rank}(M(\omega))$. Hence, $Q(\omega)$ is strictly positive definite if and only if  $M(\omega)$ has full column rank, i.e.,
\begin{equation}
    \label{eq:rank_condition}
    \operatorname{rank}(M(\omega)) = 2D.
\end{equation}

The loss $\Llower(\theta \ | \ \omega)$ becomes then strongly convex, and its minimization has a unique solution $\theta^\star = -Q^{-1}c$.

The rank condition in \eqref{eq:rank_condition} imposes a fundamental constraint: for the stacked design matrix $M \in \mathbb{R}^{(N_u + N_f) \times 2D}$ to have full column rank $2D$, it is necessary that the number of rows is at least as large as the number of columns. This leads to the critical requirement on the number of sampling points: $N_u + N_f \geq 2D$.

When this condition is violated (i.e., $N_u + N_f < 2D$), the system becomes underdetermined. This directly causes the matrix $Q({\omega})$ to be rank-deficient. 
Consequently, the lower-level problem becomes ill-posed, lacking a unique solution as $Q^{-1}$ does not exist. This might lead to an aliased solution (especially true when we add a Tikhonov regularization).

A more subtle cause of rank deficiency occurs even when $N_u + N_f \geq 2D$. This happens if the collocation points provide redundant information, failing to create $2D$ linearly independent constraints. Such a situation can arise from geometrically poor sampling (e.g., points lying on nodal lines of the basis functions) or from inherent redundancies in the randomly generated RFF basis itself (e.g., two different random vectors being nearly parallel). In these scenarios, although the design matrix $M$ has enough rows, its columns remain linearly dependent, leading to a singular or, more commonly, a numerically ill-conditioned matrix $Q$. However, if $D$ is large, a solution might be to just discard these redundancies while still keeping the same feature space $\mathcal{H}_{RFF}$ (see equation 8 of the main paper).%

To resolve this, we employ Tikhonov regularization, modifying the matrix to $Q_{\mathrm{reg}}({\omega}) = Q({\omega}) + \gamma {I}$, where $\gamma > 0$ is a small regularization parameter. This ensures $Q_{\mathrm{reg}}$ is always positive definite and invertible, since for any non-zero $\theta$, the quadratic form $\theta^\top Q_{\mathrm{reg}}\theta = \theta^\top Q\theta + \gamma\|\theta\|^2$ is strictly positive. The lower-level problem thus regains a unique, stable solution
\begin{equation}
\label{eq:regularized_solution}
\theta^{\star}({\omega}) = -(Q({\omega}) + \gamma {I})^{-1} {c}({\omega}).
\end{equation}

This analysis reveals a fundamental trade-off: while increasing $D$ enhances representational power, it demands proportionally more sampling points to maintain a well-posed system. When the sampling budget is limited, Tikhonov regularization provides a principled remedy to ensure algorithmic stability, at the cost of introducing a slight bias to the solution.

\section{Proofs of Convergence Analysis}
\label{app:convergence_analysis}
\subsection{Hypergradient Derivation via Implicit Function Theorem (IFT)}
\label{app:hypergradient}

The hypergradient $\nabla_{\wb}\Lupper(\wb)$ is computed using the chain rule, where the Implicit Function Theorem (IFT) provides the Jacobian of the lower-level solution map $\theta^\star(\wb)$.

The lower-level optimality condition is $F(\theta^\star, \wb) := Q(\wb)\theta^\star + c(\wb) = 0$. Taking the total derivative with respect to $\wb$ yields
\begin{equation*}
    \frac{\partial {F}}{\partial \theta^\top} \frac{\partial \theta^\star}{\partial \wb^\top} + \frac{\partial {F}}{\partial \wb^\top} = {0}.
\end{equation*}
Solving for the Jacobian $\frac{\partial \theta^\star}{\partial \wb^\top}$ gives
\begin{equation*}
    \frac{\partial \theta^\star}{\partial \wb^\top} = - \left( \frac{\partial {F}}{\partial \theta^\top} \right)^{-1} \frac{\partial {F}}{\partial \wb^\top} = -{Q}(\wb)^{-1} \left( \frac{\partial({Q}(\wb)\theta^\star)}{\partial \wb^\top} + \frac{\partial{c}(\wb)}{\partial \wb^\top} \right).
\end{equation*}
The full hypergradient is then obtained by the chain rule
\begin{equation}
    \nabla_{\wb}\Lupper(\wb) = \frac{\partial \Lupper}{\partial \wb} + \left( \frac{\partial \theta^\star}{\partial \wb^\top} \right)^\top \frac{\partial \Lupper}{\partial \theta}.
\end{equation}
Substituting the expression for the Jacobian, we get
\begin{equation*} \label{eq:hypergradient}
    \gradw \Lupper(\wb) = \frac{\partial \Lupper} {\partial \wb} - \left( \frac{\partial ({Q}(\wb)\theta^*)}{\partial \wb^\top} + \frac{\partial {c}(\wb)}{\partial \wb^\top} \right)^\top {Q}(\wb)^{-1} \frac{\partial \Lupper}{\partial \theta}.
\end{equation*}
This provides a computable formula for the gradient used in the upper-level optimization.

\subsection{Proof of Proposition 2}%
\label{app:proof_lipschitz_theta}

We first state the key properties required for our convergence analysis.

\begin{assumption}[Smoothness and Boundedness Properties] 
\label{as:smooth}
The bi-level optimization problem satisfies the following regularity conditions:
\begin{enumerate}
    \item The functions $Q(\omega)$ and $c(\omega)$ are continuously differentiable with respect to $\omega$. The upper-level loss $\Lupper(\theta, \omega)$ is continuously differentiable with respect to both $\theta$ and $\omega$.
    \item The lower-level problem is $\mu$-strongly convex, i.e., $Q(\omega) \succeq \mu I$ for some constant $\mu > 0$.
    \item The objective function $\Lupper(\omega)$ is bounded below by a scalar $\mathfrak{L}_{\inf}$.
\end{enumerate}
\end{assumption}

\begin{assumption}[L-Smoothness of the Hypergradient] \label{as:lipschitz}
    The upper-level objective function $\Lupper(\omega)$ is L-smooth, a standard assumption in gradient-based optimization analysis. This means its gradient, the hypergradient $\nabla_{\omega} \Lupper(\omega)$, is Lipschitz continuous with constant $L > 0$:
    \begin{equation}
        \|\nabla_{\omega} \Lupper(\omega_1) - \nabla_{\omega} \Lupper(\omega_2)\| \leq L \|\omega_1 - \omega_2\|, \quad \forall \omega_1, \omega_2 \in \mathbb{R}^P.
    \end{equation}
\end{assumption}
These assumptions trivially hold when neural network activation functions are Lipschitz continuous and the loss function is smooth, which is satisfied by our choice of $\tanh$ activations with MSE losses.

\begin{proof}
Since $\Llower$ is strongly convex and differentiable with respect to $\theta$, the unique optimal solution $\theta^\star(\wb)$ is found by setting the gradient to zero:
\begin{equation}
\nabla_{\theta} \Llower(\theta^\star(\wb)) = {Q}(\wb)\theta^\star(\wb) + {c}(\wb) = 0.
\end{equation}
This gives the closed-form solution showed in Proposition 1:%
\begin{equation}
\theta^\star(\wb) = -{Q}(\wb)^{-1}{c}(\wb).
\label{eq:analytic_solution}
\end{equation}
Then consider two parameter vectors $\wb_1, \wb_2$ from a compact set $\mathcal{W}$. We want to bound the norm of the difference $\|\theta^\star(\wb_1) - \theta^\star(\wb_2)\|$. This follows the structure from your provided image:
\begin{align}
    \|\theta^\star(\wb_1) - \theta^\star(\wb_2)\| &= \|{Q}(\wb_1)^{-1}{c}(\wb_1) - {Q}(\wb_2)^{-1}{c}(\wb_2)\| \nonumber \\
    &= \|{Q}(\wb_1)^{-1}{c}(\wb_1) - {Q}(\wb_1)^{-1}{c}(\wb_2) + {Q}(\wb_1)^{-1}{c}(\wb_2) - {Q}(\wb_2)^{-1}{c}(\wb_2)\| \nonumber \\
    &\le \|{Q}(\wb_1)^{-1}({c}(\wb_1) - {c}(\wb_2))\| + \|({Q}(\wb_1)^{-1} - {Q}(\wb_2)^{-1}){c}(\wb_2)\| \nonumber \\
    &\le \|{Q}(\wb_1)^{-1}\| \cdot \|{c}(\wb_1) - {c}(\wb_2)\| + \|{Q}(\wb_1)^{-1} - {Q}(\wb_2)^{-1}\| \cdot \|{c}(\wb_2)\|.
    \label{eq:proof_step1}
\end{align}
We use the matrix identity $A^{-1} - B^{-1} = A^{-1}(B-A)B^{-1}$ to bound the second term:
\begin{align}
    \|{Q}(\wb_1)^{-1} - {Q}(\wb_2)^{-1}\| &= \|{Q}(\wb_1)^{-1}({Q}(\wb_2) - {Q}(\wb_1)){Q}(\wb_2)^{-1}\| \nonumber \\
    &\le \|{Q}(\wb_1)^{-1}\| \cdot \|{Q}(\wb_2) - {Q}(\wb_1)\| \cdot \|{Q}(\wb_2)^{-1}\|.
    \label{eq:proof_step2}
\end{align}
Substituting Equation~\ref{eq:proof_step2} back into Equation~\ref{eq:proof_step1}:
 \begin{align}
\label{equ:theta_lips}
    \|\theta^\star(\wb_1) - \theta^\star(\wb_2)\| &\le \|{Q}(\wb_1)^{-1}\| \cdot \|{c}(\wb_1) - {c}(\wb_2)\| \nonumber \\
    &\quad + \|{Q}(\wb_1)^{-1}\| \cdot \|{Q}(\wb_2) - {Q}(\wb_1)\| \cdot \|{Q}(\wb_2)^{-1}\| \cdot \|{c}(\wb_2)\|.
\end{align}

By Assumption~\ref{as:smooth}, on the compact set $\mathcal{W}$, there exist constants $L_Q, L_c > 0$ such that $\norm{{Q}(\wb_1) - {Q}(\wb_2)} \le L_Q \norm{\wb_1 - \wb_2}$ and $\norm{{c}(\wb_1) - {c}(\wb_2)} \le L_c \norm{\wb_1 - \wb_2}$. Furthermore, due to strong convexity, there is a $\mu_Q > 0$ such that $\norm{{Q}(\wb)^{-1}} \le 1/\mu_Q$ for all $\wb \in \mathcal{W}$. Finally, since ${c}(\wb)$ is continuous on a compact set, its norm is bounded by a constant $C_{\max} = \sup_{\wb \in \mathcal{W}} \norm{{c}(\wb)}$.

Substituting these bounds into Equation~\ref{equ:theta_lips}:
\begin{align*}
    \norm{\theta^\star(\wb_1) - \theta^\star(\wb_2)} &\le \frac{1}{\mu_Q} (L_c \norm{\wb_1 - \wb_2}) + \left( \frac{1}{\mu_Q} \cdot L_Q \norm{\wb_1 - \wb_2} \cdot \frac{1}{\mu_Q} \right) C_{\max} \\
    &= \left( \frac{L_c}{\mu_Q} + \frac{L_Q C_{\max}}{\mu_Q^2} \right) \norm{\wb_1 - \wb_2}.
\end{align*}
Defining the constant $K = \frac{L_c}{\mu_Q} + \frac{   L_Q C_{\max}}{\mu_Q^2}$ completes the proof.
\end{proof}

\subsection{Proof of Theorem 1}%
\label{app:proof_convergence}

The proof relies on the following standard lemma for L-smooth functions.

\begin{lemma}[Sufficient Decrease]
\label{lem:sufficient_decrease}
If $\Lupper(\wb)$ is L-smooth with constant $L$ and the step size $\eta \in (0, 2/L)$, the gradient descent update rule ensures a sufficient decrease in the objective function:
\begin{equation*}
    \Lupper(\wb_{k+1}) \le \Lupper(\wb_k) - \eta \left(1 - \frac{L\eta}{2}\right) \norm{\nabla_{\wb} \Lupper(\wb_k)}^2.
\end{equation*}
\end{lemma}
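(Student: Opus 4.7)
The plan is to derive the inequality directly from the standard descent lemma that L-smoothness entails; no additional bi-level structure is needed here, as the claim concerns any L-smooth function evaluated along a gradient step. First, I would write the exact first-order Taylor expansion with integral remainder,
\[
\Lupper(\wb_{k+1} \ | \ \theta) - \Lupper(\wb_k \ | \ \theta) = \int_0^1 \langle \gradw \Lupper(\wb_k + t(\wb_{k+1}-\wb_k) \ | \ \theta),\, \wb_{k+1}-\wb_k \rangle\, dt,
\]
and add-subtract $\langle \gradw \Lupper(\wb_k \ | \ \theta),\, \wb_{k+1}-\wb_k\rangle$ inside the integral to isolate the part involving gradient differences.

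Next, applying Cauchy-Schwarz together with the hypothesis that $\gradw \Lupper(\cdot \ | \ \theta)$ is L-Lipschitz gives the classical quadratic upper bound
\[
\Lupper(\wb_{k+1} \ | \ \theta) \le \Lupper(\wb_k \ | \ \theta) + \langle \gradw \Lupper(\wb_k \ | \ \theta),\, \wb_{k+1}-\wb_k \rangle + \frac{L}{2} \norm{\wb_{k+1}-\wb_k}^2;
\]
the factor $1/2$ comes from $\int_0^1 t\, dt$ when bounding the remainder. I would then substitute the gradient descent update $\wb_{k+1} - \wb_k = -\eta \gradw \Lupper(\wb_k \ | \ \theta)$ and collect the two resulting $\norm{\gradw \Lupper(\wb_k \ | \ \theta)}^2$ terms, which immediately yields the claim. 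Finally, the admissibility range $\eta \in (0, 2/L)$ makes $1 - L\eta/2 > 0$, so the bound is a genuine sufficient decrease unless $\gradw \Lupper(\wb_k \ | \ \theta) = 0$.

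The proof is entirely routine and there is no real obstacle: the lemma is a textbook consequence of L-smoothness, included here simply as the descent tool for Theorem~\ref{thm:convergence}. Such subtlety as there is lies in the constant $1/2$ in the quadratic bound, which traces back to $\int_0^1 t\, dt$ in the remainder estimate rather than a naive $L\eta$ upper bound; this constant is exactly what pins down the admissible step-size interval $(0, 2/L)$.
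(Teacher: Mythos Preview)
Your proposal is correct and follows essentially the same approach as the paper: both invoke the quadratic upper bound (descent lemma) implied by L-smoothness and then substitute the gradient step $\wb_{k+1}-\wb_k = -\eta\gradw\Lupper(\wb_k)$. The only difference is that you derive the descent lemma via the integral remainder form of Taylor's theorem, whereas the paper simply cites it as a known consequence of L-smoothness before substituting.
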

\begin{proof}
From the L-smoothness property (descent lemma) under Assumption~\ref{as:lipschitz}, we have:
\begin{align*}
    \Lupper(\wb_{k+1}) &\le \Lupper(\wb_k) + \nabla_{\wb} \Lupper(\wb_k)^\top(\wb_{k+1} - \wb_k) + \frac{L}{2}\norm{\wb_{k+1} - \wb_k}^2 \\
    \intertext{Substituting the gradient descent update $\wb_{k+1} - \wb_k = -\eta \nabla_{\wb} \Lupper(\wb_k)$:}
    \Lupper(\wb_{k+1}) &\le \Lupper(\wb_k) - \eta \norm{\nabla_{\wb} \Lupper(\wb_k)}^2 + \frac{L\eta^2}{2}\norm{\nabla_{\wb} \Lupper(\wb_k)}^2 \\
    &= \Lupper(\wb_k) - \eta \left(1 - \frac{L\eta}{2}\right) \norm{\nabla_{\wb} \Lupper(\wb_k)}^2.
\end{align*}
\end{proof}

\begin{proof}[Proof of Theorem 1]%
Let $\delta = \eta(1 - L\eta/2)$. Since $\eta \in (0, 2/L)$, we have $\delta > 0$. Rearranging the inequality from Lemma~\ref{lem:sufficient_decrease} gives:
\begin{equation*}
    \delta \norm{\nabla_{\wb} \Lupper(\wb_k)}^2 \le \Lupper(\wb_k) - \Lupper(\wb_{k+1}).
\end{equation*}
We now sum this inequality from $k=0$ to $T$ to form a telescoping series:
\begin{align*}
    \sum_{k=0}^{T} \delta \norm{\nabla_{\wb} \Lupper(\wb_k)}^2 &\le \sum_{k=0}^{T} (\Lupper(\wb_k) - \Lupper(\wb_{k+1})) \\
    &= (\Lupper(\wb_0) - \Lupper(\wb_1)) + (\Lupper(\wb_1) - \Lupper(\wb_2)) + \dots \\
    &\quad\quad{}+ (\Lupper(\wb_T) - \Lupper(\wb_{T+1})) \\
    &= \Lupper(\wb_0) - \Lupper(\wb_{T+1}).
\end{align*}
The objective function is bounded below by $\mathfrak{L}_{\inf} \geq 0$. Therefore, $\Lupper(\wb_{T+1}) \ge \mathfrak{L}_{\inf}$. This gives us:
\begin{equation*}
    \sum_{k=0}^{T} \delta \norm{\nabla_{\wb} \Lupper(\wb_k)}^2 \le \Lupper(\wb_0) - \mathfrak{L}_{\inf}.
\end{equation*}
As $T \to \infty$, the right-hand side is a finite constant. This implies that the infinite series of squared gradient norms is bounded:
\begin{equation*}
    \sum_{k=0}^{\infty} \norm{\nabla_{\wb} \Lupper(\wb_k)}^2 \le \frac{\Lupper(\wb_0) - \mathfrak{L}_{\inf}}{\delta} < \infty.
\end{equation*}
For an infinite series of non-negative terms to converge to a finite value, the terms themselves must converge to zero. Therefore, we must have:
\begin{equation*}
    \lim_{k \to \infty} \norm{\nabla_{\wb} \Lupper(\wb_k)}^2 = 0,
\end{equation*}
which implies that $\lim_{k \to \infty} \norm{\nabla_{\wb} \Lupper(\wb_k)} = 0$. This completes the proof that the algorithm converges to a stationary point.
\end{proof}

\section{Proofs for Projection Error Analysis}
\label{app:proofs_projection}

This appendix provides the foundational definitions and detailed proofs for the projection error analysis presented in Section 4.2.%

\subsection{Foundational Concepts}
\label{app:concepts}
\begin{definition}[Universal Kernel]
A continuous kernel \( k \) defined on a compact metric space \( (\mathcal{X}, d) \) is called a \textbf{universal kernel} if the Reproducing Kernel Hilbert Space (RKHS) \( \mathcal{H}_k \) induced by \( k \) is dense in the space of continuous functions \( C(\mathcal{X}) \) with respect to the uniform norm \( \|\cdot\|_{\infty} \).

Mathematically, this means that for any function \( g \in C(\mathcal{X}) \) and any \( \varepsilon > 0 \), there exists a function \( f \in \mathcal{H}_k \) such that:
\[
\sup_{x \in \mathcal{X}} |f(x) - g(x)| < \varepsilon.
\]

An equivalent way to state this is that the closure of \( \mathcal{H}_k \) under the uniform norm is \( C(\mathcal{X}) \):
\[
\overline{\mathcal{H}_k} = C(\mathcal{X}).
\]
\end{definition}

\begin{definition}
\label{def:projection_error}
Let $f \in \mathcal{L}^2(\Omega, \R)$ be a target function. The \textbf{projection error} of $f$ onto an Hilbert space $\mathcal{H} \subseteq \mathcal{L}^2(\Omega, \R)$ is defined as
\begin{equation}
    \mathrm{Err}(f, \mathcal{H}) := \inf_{g \in \mathcal{H}} \|f - g\|.
\end{equation}
If this infimum is attained by some $g^* \in \mathcal{H}$, then $g^*$ is the projection of $f$ onto $\mathcal{H}$.
\end{definition}

\begin{theorem}[Composition of Universal Kernels]
\label{thm:app_composite_universal}
If $k(z, z')$ is a universal kernel on a space $\mathcal{Z}$, and the mapping $h:\mathcal{X} \to \mathcal{Z}$ is continuous and sufficiently expressive (e.g., injective), then the composite kernel $k_h(x, x') := k(h(x), h(x'))$ is universal on $\mathcal{X}$.
\end{theorem}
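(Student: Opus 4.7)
The plan is to prove that the RKHS $\mathcal{H}_{k_h}$ associated with the composite kernel $k_h$ is dense in $C(\mathcal{X})$ under the uniform norm, which is exactly the definition of universality stated earlier in the appendix. The key structural ingredient I would invoke is the well-known pullback identification $\mathcal{H}_{k_h} = \{ f \circ h : f \in \mathcal{H}_k \}$, which follows from verifying that $k_h$ satisfies the reproducing property on the space of pullbacks $\{f \circ h : f \in \mathcal{H}_k\}$ equipped with the quotient norm, and then invoking uniqueness of the RKHS associated to a positive definite kernel. With this identification, approximating an arbitrary $g \in C(\mathcal{X})$ by elements of $\mathcal{H}_{k_h}$ reduces to approximating it by compositions $f \circ h$ with $f \in \mathcal{H}_k$.

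Next, I would exploit injectivity. Since $\mathcal{X}$ is compact and $h : \mathcal{X} \to \mathcal{Z}$ is continuous and injective into a Hausdorff space, a standard topological fact implies that $h$ is a homeomorphism onto its image $h(\mathcal{X})$, which is itself compact in $\mathcal{Z}$. Hence, for any target $g \in C(\mathcal{X})$, the function $\tilde{g} := g \circ h^{-1}$ is well-defined and continuous on $h(\mathcal{X})$. Applying Tietze's extension theorem to the normal (metric) space $\mathcal{Z}$, I extend $\tilde{g}$ to a function $G \in C(\mathcal{Z})$ with $G|_{h(\mathcal{X})} = \tilde{g}$.

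With the extension in hand, I would apply the assumed universality of $k$ on $\mathcal{Z}$: for any $\varepsilon > 0$ there exists $f \in \mathcal{H}_k$ such that $\sup_{z \in \mathcal{Z}}|f(z) - G(z)| < \varepsilon$. The composition $f \circ h$ lies in $\mathcal{H}_{k_h}$ by the first step, and using $G(h(x)) = g(x)$ for all $x \in \mathcal{X}$,
\[
\sup_{x \in \mathcal{X}} |(f \circ h)(x) - g(x)| = \sup_{x \in \mathcal{X}} |f(h(x)) - G(h(x))| \le \sup_{z \in h(\mathcal{X})} |f(z) - G(z)| < \varepsilon.
\]
Since $\varepsilon$ and $g$ were arbitrary, $\mathcal{H}_{k_h}$ is dense in $C(\mathcal{X})$, which is the required universality of $k_h$.

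The main obstacle I anticipate is the rigorous justification of the pullback identification $\mathcal{H}_{k_h} = \mathcal{H}_k \circ h$, and pinning down what ``sufficiently expressive'' means beyond injectivity. Injectivity is essentially the minimal hypothesis that makes $\tilde{g}$ well-defined, so every $g \in C(\mathcal{X})$ is realizable as the restriction of a continuous function on $h(\mathcal{X})$; without it, only functions that are constant on the fibers of $h$ can ever be approximated, and the density statement fails. In the setting of the paper, where $h = h_{\omega}$ is a trained feature map, this translates into requiring that the learned features separate distinct inputs, and one could relax the statement to ``universal on the equivalence classes of $h$'' if strict injectivity is too strong.
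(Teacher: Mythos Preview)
Your argument is correct and in fact goes well beyond what the paper itself provides: the paper does not give a proof of this theorem at all, but simply appends a remark attributing the result to the composition theorem in \cite{micchelli2006universal}. So there is no ``paper's proof'' to compare against in any substantive sense.

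What you do is supply an elementary, self-contained route that makes transparent exactly where each hypothesis is used. The pullback identification $\mathcal{H}_{k_h}=\{f\circ h:f\in\mathcal{H}_k\}$ is the standard feature-map description of the RKHS of a pulled-back kernel, and your sketch of why it holds (reproducing property plus uniqueness of the RKHS) is the right one; if you want to be fully rigorous you can simply observe that $z\mapsto k(\cdot,z)\in\mathcal{H}_k$ is a feature map for $k$, so $x\mapsto k(\cdot,h(x))$ is a feature map for $k_h$, and then the RKHS of $k_h$ is exactly $\{\langle w,k(\cdot,h(\cdot))\rangle_{\mathcal{H}_k}:w\in\mathcal{H}_k\}=\{w\circ h:w\in\mathcal{H}_k\}$. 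The topological step (continuous injection on a compact space into a Hausdorff space is a homeomorphism onto its image, hence $h(\mathcal{X})$ is closed and Tietze applies) is exactly where injectivity enters, and your final paragraph correctly identifies that without injectivity one can only approximate functions constant on the fibers of $h$. Relative to the bare citation in the paper, your argument has the advantage of exposing the role of ``sufficiently expressive'' concretely and of being verifiable without consulting an external source; the paper's approach buys brevity at the cost of that transparency.
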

\begin{remark}
    The universality of the composite kernel relies on the composition
theorem from \cite{micchelli2006universal}.
\end{remark}

\begin{theorem}[RFF Approximation]
\label{rff_stdaprox}
Let \(k: \mathbb{R}^m \times \mathbb{R}^m \to \mathbb{R}\) be a continuous, translation-invariant, positive definite kernel function, i.e.,
\[
k(x, x') = k(x - x') = \int_{\mathbb{R}^d} e^{i w^T (x - x')} d\mu(w),
\]
where \(\mu\) is a probability measure with compact support. Define the Random Fourier Feature (RFF) map $\phi_{\mathrm{RFF}}: \mathcal{X} \to \mathbb{R}^{2D}$ as
\begin{equation}
    \phi_{\mathrm{RFF}}(x) 
    := \sqrt{\frac{1}{D}}
    \begin{bmatrix}
        \cos(w_1^\top x + b_1)\\
        \cos(w_D^\top x + b_D)\\
        \vdots \\
         \sin(w_1^\top x + b_1) \\
        \sin(w_D^\top x + b_D)
    \end{bmatrix},
\end{equation}
where $w_i \overset{\mathrm{i.i.d.}}{\sim} \mu$ and $b_i \overset{\mathrm{i.i.d.}}{\sim} \mathrm{Uniform}[0, 2\pi]$, with $\{w_i\}$ independent of $\{b_i\}$.

Let \(\mathcal{H}_k\) be the RKHS corresponding to \(k\). For any \(f \in \mathcal{H}_k\) and any probability distribution \(\rho\):
\[
\lim_{D \to \infty} \inf_{\theta \in \mathbb{R}^D} \left\| f - \phi_{\mathrm{RFF}}^T \theta \right\|_{L^2(\rho)} = 0,
\]
i.e., the function space spanned by RFF features is dense in \(\mathcal{H}_k\).
\end{theorem}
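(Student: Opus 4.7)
The plan is to combine Bochner's theorem with Monte Carlo concentration of the empirical kernel $k_D(x,x'):=\phi_{\mathrm{RFF}}(x)^\top\phi_{\mathrm{RFF}}(x')$, and then lift kernel approximation to function approximation via the density of finite kernel-section combinations in $\mathcal{H}_k$. The argument unfolds in three steps.

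First I would show that $k_D$ is an unbiased Monte Carlo estimator of $k$ that converges uniformly on compacta. By Bochner's theorem and real-valuedness of $k$ (which forces $\mu$ to be symmetric), a standard symmetrization gives
\begin{equation*}
    k(x,x')=\mathbb{E}_{w\sim\mu,\,b\sim\mathcal{U}[0,2\pi]}\!\bigl[\cos(w^\top(x-x'))\bigr]=\mathbb{E}\bigl[\phi_{\mathrm{RFF}}(x)^\top\phi_{\mathrm{RFF}}(x')\bigr],
\end{equation*}
so $\mathbb{E}[k_D]=k$. Because $\mathrm{supp}(\mu)$ is compact, the features are uniformly bounded and globally Lipschitz with a common constant, so Rahimi--Recht's uniform concentration bound yields $\sup_{x,x'\in K}|k_D-k|\to 0$ almost surely on every compact $K\subset\R^m$. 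Moreover, the deterministic bound $|k|,|k_D|\le 1$ holds everywhere, which will be needed to handle an unbounded sampling measure $\rho$.

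Second, I would use density of finite kernel-section combinations in $\mathcal{H}_k$: for any $\varepsilon>0$ and $f\in\mathcal{H}_k$ there exist $N$, $\{x_i\}_{i=1}^N$, $\{\alpha_i\}_{i=1}^N$ such that $f_N(\cdot):=\sum_{i=1}^N\alpha_i k(x_i,\cdot)$ satisfies $\|f-f_N\|_{\mathcal{H}_k}<\varepsilon/2$. The reproducing property gives $|g(x)|\le\|g\|_{\mathcal{H}_k}\sqrt{k(x,x)}\le\|g\|_{\mathcal{H}_k}$, whence $\|f-f_N\|_{L^2(\rho)}\le\|f-f_N\|_{L^\infty}<\varepsilon/2$ for any probability measure $\rho$. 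Now set
\begin{equation*}
    \theta_{N,D}:=\sum_{i=1}^N\alpha_i\,\phi_{\mathrm{RFF}}(x_i)\in\R^{2D},\qquad \phi_{\mathrm{RFF}}(x)^\top\theta_{N,D}=\sum_{i=1}^N\alpha_i\,k_D(x_i,x).
\end{equation*}
The pointwise error is controlled by $\bigl(\sum_i|\alpha_i|\bigr)\max_i|k(x_i,x)-k_D(x_i,x)|$, and dominated convergence combined with the uniform bound $|k-k_D|\le 2$ yields $\|f_N-\phi_{\mathrm{RFF}}^\top\theta_{N,D}\|_{L^2(\rho)}\to 0$ almost surely. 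A triangle inequality then produces $\inf_{\theta\in\R^{2D}}\|f-\phi_{\mathrm{RFF}}^\top\theta\|_{L^2(\rho)}\le\varepsilon/2+o_D(1)$, and arbitrariness of $\varepsilon$ concludes.

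The main obstacle is the transfer of compact-set uniform convergence of $k_D$ to $L^2(\rho)$ convergence when $\rho$ need not be compactly supported. I would resolve this via a two-region truncation: for any $\delta>0$ choose a compact $K$ with $\rho(K^c)<\delta$, apply the Rahimi--Recht uniform bound on $K$, control the tail on $K^c$ by the deterministic $L^\infty$ bound $|k-k_D|\le 2$, and let $\delta\to 0$. A secondary, routine technicality is the symmetrization of $\mu$ and the proper introduction of the random phase $b$ so that the expectation structure of $\phi_{\mathrm{RFF}}$ matches the Bochner integral exactly, but this is by now a standard manipulation in the RFF literature.
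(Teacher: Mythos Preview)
Your proposal is correct and follows exactly the route the paper gestures at: the paper does not actually prove this theorem but only remarks that it ``is a direct consequence of the uniform convergence of the RFF kernel approximation to the true kernel'' and cites Rahimi--Recht. Your three-step argument (unbiased kernel estimate plus Rahimi--Recht uniform concentration, density of finite kernel sections in $\mathcal{H}_k$, then triangle inequality with a tail-truncation to handle non-compact $\rho$) is precisely the standard way to turn that citation into a proof, so you are supplying the details the paper omits rather than taking a different approach.
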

\begin{remark}
    This result is a direct consequence of the uniform convergence of the RFF kernel approximation to the true kernel \cite{rahimi2007random}.
\end{remark}

\begin{theorem}[RFF Approximation for Composite Kernels]
\label{thm:app_rff_composite}
Let $k$ be a continuous, translation-invariant, positive definite kernel function on $\R^m$ and $h: \R^d \to \R^m$ be a continuous mapping. Let $\mathcal{H}_{k_h}$ be the RKHS of the composite kernel $k_h(x, x') = k(h(x), h(x'))$. The function space spanned by the composite RFF features, $\mathcal{H}_{\mathrm{RFF}}$ defined in Equation 8 of the main paper, %
is dense in $\mathcal{H}_{k_h}$ with respect to the $L^2(\rho)$ norm when $D \to \infty$.
\[
\lim_{D \to \infty} \inf_{\theta \in \mathbb{R}^D} \left\| f(x) - \psi(x)^\top \theta \right\|_{L^2(\rho)} = 0
\]
\end{theorem}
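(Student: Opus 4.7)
The plan is to reduce the claim for the composite kernel to the standard RFF approximation result (Theorem~\ref{rff_stdaprox}) by exploiting the structure of the RKHS of a composite kernel, together with a change-of-variables through the map $h$. The key observation is that composite RFF features are pullbacks through $h$ of the standard RFF features on $\R^m$, and, analogously, every function in $\mathcal{H}_{k_h}$ is a pullback through $h$ of some function in $\mathcal{H}_k$.

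First, I would invoke the standard RKHS composition result (a consequence of the Moore--Aronszajn correspondence): since $k_h(x, x') = k(h(x), h(x'))$, any $f \in \mathcal{H}_{k_h}$ admits a representation $f = g \circ h$ for some $g \in \mathcal{H}_k$. This follows from the feature-map perspective used in Theorem~\ref{thm:app_composite_universal}: if $\Phi$ denotes the canonical feature map for $k$, then $\Phi \circ h$ is a feature map for $k_h$, and every element of $\mathcal{H}_{k_h}$ can be expressed as $\langle w, \Phi(h(\cdot))\rangle_{\mathcal{H}_k}$ for some $w \in \mathcal{H}_k$, which equals $g \circ h$ for $g(z) := \langle w, \Phi(z)\rangle_{\mathcal{H}_k}$.

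Next, I would apply Theorem~\ref{rff_stdaprox} to $g \in \mathcal{H}_k$ with respect to the push-forward probability measure $\rho_h := h_{*}\rho$ on $\R^m$, where $\rho$ is the input measure on $\R^d$. Continuity of $h$ guarantees $\rho_h$ is a well-defined Borel probability measure, so the theorem yields a sequence $\theta_D \in \R^{2D}$ such that $\|g - \phi_{\mathrm{RFF}}^\top \theta_D\|_{L^2(\rho_h)} \to 0$ as $D \to \infty$, where $\phi_{\mathrm{RFF}}$ is the standard RFF map on $\R^m$ corresponding to the spectral measure $\mu$ of $k$. The $\psi_D$ features from Equation~\ref{eq:rff_mapping} coincide with $\phi_{\mathrm{RFF}} \circ h$ (up to the standard equivalence between the $\{\cos,\sin\}$ parametrization and the $\cos(\omega^\top z + b)$ parametrization, which span the same linear space).

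Finally, the change-of-variables formula closes the argument: since $\psi_D = \phi_{\mathrm{RFF}} \circ h$,
\begin{equation*}
    \|f - \psi_D^\top \theta_D\|_{L^2(\rho)}^2
    = \int |g(h(x)) - \phi_{\mathrm{RFF}}(h(x))^\top \theta_D|^2 \, d\rho(x)
    = \|g - \phi_{\mathrm{RFF}}^\top \theta_D\|_{L^2(\rho_h)}^2 \xrightarrow[D\to\infty]{} 0.
\end{equation*}
The main obstacle is the RKHS composition step: when $h$ is not injective, the natural map $g \mapsto g \circ h$ from $\mathcal{H}_k$ to $\mathcal{H}_{k_h}$ has a nontrivial kernel, so one must choose the representative $g$ by projecting $w$ onto $\overline{\mathrm{span}}\{\Phi(h(x)) : x \in \Omega\}$ to guarantee existence of such a $g$. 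Once this representative is obtained, the remainder of the argument is routine and inherits the convergence rates of the classical RFF theorem applied to $(g, \rho_h)$.
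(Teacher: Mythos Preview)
Your proposal is correct and follows essentially the same route as the paper: represent any $f\in\mathcal{H}_{k_h}$ as $g\circ h$ with $g\in\mathcal{H}_k$, apply the standard RFF approximation (Theorem~\ref{rff_stdaprox}) to $g$ under the push-forward measure $\rho_h=h_*\rho$, and then use the change-of-variables identity $\|f-\psi_D^\top\theta\|_{L^2(\rho)}=\|g-\phi_{\mathrm{RFF}}^\top\theta\|_{L^2(\rho_h)}$ to conclude. Your treatment is in fact slightly more careful than the paper's, which simply asserts $\mathcal{H}_{k_h}=\{g\circ h:g\in\mathcal{H}_k\}$ without discussing the non-injective case or the $\{\cos,\sin\}$ versus $\cos(\omega^\top z+b)$ parametrization equivalence.
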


\begin{proof}
The proof connects the approximation properties in the base space \(\mathcal{H}_k\) to the composite space \(\mathcal{H}_{k_h}\) through the mapping \(h\).
The composite RKHS \(\mathcal{H}_{k_h}\) consists of functions formed by composing elements from the base RKHS \(\mathcal{H}_k\) with the mapping \(h\), that is, \(\mathcal{H}_{k_h} = \{ g(h(\cdot)) \mid g \in \mathcal{H}_k \}\). Thus, for any function \(f \in \mathcal{H}_{k_h}\), there exists a corresponding function \(g \in \mathcal{H}_k\) such that \(f(x) = g(h(x))\) for all \(x \in \mathbb{R}^d\).

To proceed, define a standard Random Fourier Feature (RFF) map for the base kernel \(k(z, z')\) on~\(\mathbb{R}^m\):
\[
\psi_D(z) := \sqrt{\frac{1}{D}} \begin{bmatrix} \cos(w_1^T z) \\ \vdots \\ \cos(w_D^T z) \\ \sin(w_1^T z) \\ \vdots \\ \sin(w_D^T z) \end{bmatrix},
\]
where \(w_i \overset{\mathrm{i.i.d.}}{\sim} \mu\) and \(b_i \overset{\mathrm{i.i.d.}}{\sim} \mathrm{Uniform}[0, 2\pi]\), with \(\{w_i\}\) independent of \(\{b_i\}\). The classic RFF approximation (Theorem \ref{rff_stdaprox}) guarantees that the linear span of these features is dense in \(\mathcal{H}_k\) with respect to the \(L^2\) norm under suitable measures. Let \(\rho_h\) be the pushforward probability measure of \(\rho\) under the map \(h\). Then, for the function \(g \in \mathcal{H}_k\),
\[
\lim_{D \to \infty} \inf_{\theta \in \mathbb{R}^{2D}} \left\| g - \psi_D(x)^T \theta \right\|_{L^2(\rho_h)} = 0.
\]

The goal is to show that \(f(x)\) can be approximated by a function of the form \(\psi(x)^\top \theta\). We analyze the squared \(L^2(\rho)\) norm of the error:
\[
\left\| f(x) - \psi(x)^\top \theta \right\|_{L^2(\rho)}^2 = \int_{\mathbb{R}^d} \left| f(x) - \psi(x)^\top \theta \right|^2 d\rho(x).
\]
Substituting \(f(x) = g(h(x))\) and \(\psi(x) =\psi_D(h(x))\) yields
\[
\int_{\mathbb{R}^d} \left| g(h(x)) - \psi_D(h(x))^T \theta \right|^2 \, d\rho(x).
\]
By the change of variables (or pushforward measure property), this integral equals
\[
\int_{\mathbb{R}^m} \left| g(z) - \psi_D(z)^T \theta \right|^2 \, d\rho_h(z) = \left\| g - \psi_D^T \theta \right\|_{L^2(\rho_h)}^2.
\]
From the density in \(L^2(\rho_h)\), this error can be made arbitrarily small as \(D \to \infty\) by choosing appropriate \(\theta\). Therefore, for any \(f \in \mathcal{H}_{k_h}\),
\[
\lim_{D \to \infty} \inf_{\theta \in \mathbb{R}^{2D}} \left\| f - \psi_D^T \theta \right\|_{L^2(\rho)} = 0,
\]
completing the proof.
\end{proof}

\begin{lemma}[Expressive Power of the RFF-Enhanced Space]
\label{lem:denseness}
When the number of the RFF features $D \to \infty$, the Feature Space $\mathcal{H}_{f}$ is a subset of the closure of the Composite RFF Function Space $\mathcal{H}_{\mathrm{RFF}}$ with respect to the $\mathcal{L}^2(\Omega,\R)$ norm.
\end{lemma}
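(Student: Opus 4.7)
The plan is to build a two-step approximation chain $\mathcal{H}_f \hookrightarrow \mathcal{H}_{k_h} \hookrightarrow \overline{\mathcal{H}_{\mathrm{RFF}}}^{L^2}$, where $k_h(x,x') := k(h_\omega(x), h_\omega(x'))$ is the composite kernel. The first inclusion uses the density of an RKHS with a universal kernel in the space of continuous functions; the second uses the composite RFF density result (Theorem~\ref{thm:app_rff_composite}).

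First, I would pick any $g \in \mathcal{H}_f$, so that $g(x) = h_\omega(x)^\top \theta$ for some $\theta \in \mathbb{R}^p$, and factor it as $g = \tilde g \circ h_\omega$ with $\tilde g : \mathbb{R}^p \to \mathbb{R}$, $\tilde g(z) = z^\top \theta$, which is linear, hence continuous. Assuming $\Omega$ is compact and $h_\omega$ is a continuous feed-forward network, the image $\mathcal{Z} := h_\omega(\Omega) \subset \mathbb{R}^p$ is compact. The sampling distribution $\mathcal{N}(0,\sigma^2)$ of $\mathbf{B}_D$ in Equation~\ref{eq:rff_mapping} induces a Gaussian base kernel $k$, which is universal on any compact subset of $\mathbb{R}^p$. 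Applying universality of $k$ on $\mathcal{Z}$, for every $\varepsilon > 0$ there exists $f_\varepsilon \in \mathcal{H}_k$ such that $\sup_{z \in \mathcal{Z}} |\tilde g(z) - f_\varepsilon(z)| \le \varepsilon/(2\sqrt{|\Omega|})$. Setting $g_\varepsilon := f_\varepsilon \circ h_\omega$, we have $g_\varepsilon \in \mathcal{H}_{k_h}$ and $\|g - g_\varepsilon\|_{L^2(\Omega)} \le \varepsilon/2$.

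Second, I would apply Theorem~\ref{thm:app_rff_composite} to $g_\varepsilon \in \mathcal{H}_{k_h}$: for $D$ sufficiently large there exists $\theta' \in \mathbb{R}^{2D}$ with $\|g_\varepsilon - \psi_D^\top \theta'\|_{L^2(\Omega)} \le \varepsilon/2$, where $\psi_D = \gamma_D \circ h_\omega$. A triangle inequality then yields $\|g - \psi_D^\top \theta'\|_{L^2(\Omega)} \le \varepsilon$, so $g$ belongs to the $L^2(\Omega)$-closure of $\bigcup_{D \ge 1} \mathcal{H}_{\mathrm{RFF}}$, proving the claim.

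The main subtlety is ensuring the universality/density steps truly apply. As stated, Theorem~\ref{thm:app_composite_universal} requires an expressiveness (e.g.\ injectivity) condition on $h_\omega$ that is nontrivial for generic feed-forward networks. I would bypass this by the observation above: since we only need to approximate $\tilde g$ on the image $\mathcal{Z} = h_\omega(\Omega)$, universality of the Gaussian base kernel $k$ restricted to the compact set $\mathcal{Z}$ already suffices, independently of whether $h_\omega$ is itself injective. With this point settled, the rest of the argument is a clean triangle-inequality combination of two density results already established in this appendix.
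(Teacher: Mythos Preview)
Your proposal follows the same two-step structure as the paper's proof: approximate $g \in \mathcal{H}_f$ first by an element of the composite RKHS $\mathcal{H}_{k_h}$ (via kernel universality) and then by an element of $\mathcal{H}_{\mathrm{RFF}}$ (via Theorem~\ref{thm:app_rff_composite}), concluding with the triangle inequality. The paper's version invokes Theorem~\ref{thm:app_composite_universal} directly to assert that $k_h$ is universal on $\mathcal{X}$, which tacitly imports the injectivity/expressiveness hypothesis on $h_\omega$; you instead factor $g = \tilde g \circ h_\omega$ and apply universality of the Gaussian base kernel $k$ on the compact image $\mathcal{Z} = h_\omega(\Omega)$, which sidesteps that hypothesis entirely. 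This is a genuine, if small, improvement: your argument works for any continuous $h_\omega$, whereas the paper's route requires an assumption that need not hold for a generic feed-forward network. Otherwise the two proofs coincide, up to the normalization of the $L^2$ norm (you use Lebesgue measure on $\Omega$ and carry the factor $\sqrt{|\Omega|}$; the paper works with a probability measure $\rho$).
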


\begin{proof}
Let \(g_1\) be an arbitrary function in \(\mathcal{H}_{f}\).
Since $g_1$ is a linear combination of the continuous functions in $h(x)$, $g_1$ is a continuous function on a compact set $\mathcal{X}$, i.e., $g_1 \in C(\mathcal{X})$. By Theorem~\ref{thm:app_composite_universal}, the composite kernel $k_h$ is universal. Thus, its RKHS $\mathcal{H}_{k_h}$ is dense in $C(\mathcal{X})$ under the uniform norm. This means that for any $\epsilon > 0$, there exists a function $f_{k_h} \in \mathcal{H}_{k_h}$ such that:
\begin{equation*}
    \norm{g_1 - f_{k_h}}_{\infty} = \sup_{x \in \mathcal{X}} |g_1(x) - f_{k_h}(x)| < \frac{\epsilon}{2}.
\end{equation*}

For any probability measure $\rho$, the $L^2(\rho)$ norm is bounded by the $L_\infty$ norm, that is:
\begin{align*}
    \norm{g_1 - f_{k_h}}_{L^2(\rho)}^2 &= \int_{\mathcal{X}} |g_1(x) - f_{k_h}(x)|^2 d\rho(x) \\
    &\le \int_{\mathcal{X}} \left(\sup_{z \in \mathcal{X}} |g_1(z) - f_{k_h}(z)|\right)^2 d\rho(x) \\
    &= \norm{g_1 - f_{k_h}}_{\infty}^2 \int_{\mathcal{X}} d\rho(x) = \norm{g_1 - f_{k_h}}_{\infty}^2.
\end{align*}
Thus we have $\norm{g_1 - f_{k_h}}_{L^2(\rho)} \le \norm{g_1 - f_{k_h}}_{\infty} < \frac{\epsilon}{2}$.

From Theorem~\ref{thm:app_rff_composite}, the composite RFF space $\mathcal{H}_{\mathrm{RFF}}$ is dense in the RKHS $\mathcal{H}_{k_h}$ under the $L^2(\rho)$ norm. Therefore, for our function $f_{k_h}$ from the previous step, there exists a function $f_{\mathrm{RFF}} \in \mathcal{H}_{\mathrm{RFF}}$ such that:
\begin{equation*}
    \norm{f_{k_h} - f_{\mathrm{RFF}}}_{L^2(\rho)} < \frac{\epsilon}{2}.
\end{equation*}

Combining the results using the triangle inequality for the $L^2$ norm:
\begin{align*}
    \norm{g_1 - f_{\mathrm{RFF}}}_{L^2(\rho)} &\le \norm{g_1 - f_{k_h}}_{L^2(\rho)} + \norm{f_{k_h} - f_{\mathrm{RFF}}}_{L^2(\rho)} < \frac{\epsilon}{2} + \frac{\epsilon}{2} = \epsilon.
\end{align*}
Since for any $g_1 \in \mathcal{H}_{f}$ and any $\epsilon > 0$, we have found an element $f_{\mathrm{RFF}} \in \mathcal{H}_{\mathrm{RFF}}$ that is $\epsilon$-close in the $L^2$ norm, we have proven that $\mathcal{H}_{f} \subseteq \overline{\mathcal{H}_{\mathrm{RFF}}}$.
\end{proof}

\subsection{Proof of Theorem 2 %
(Projection Error Comparison)}
\label{app:proof_projection_error}

We have shown in Lemma~\ref{lem:denseness} that the premise \(\mathcal{H}_{f} \subseteq \overline{\mathcal{H}_{\mathrm{RFF}}}\) holds. The proof now proceeds as follows.

For any function \( g \in L^2 \), define its \( L^2 \) approximation error with respect to \( f \) as \( E(g) := \|f - g\|_{L^2} \). The function \( E: L^2 \to \mathbb{R} \) is continuous, which follows from the reverse triangle inequality, establishing that the norm is a continuous function.

Let \( g_1 \) be an arbitrary element in \( \mathcal{H}_{f} \). Since \( \mathcal{H}_{f} \subseteq \overline{\mathcal{H}_{\mathrm{RFF}}}\), by the definition of closure, there exists a sequence of functions \( \{g_2^{(n)}\}_{n=1}^{\infty} \) in \( \mathcal{H}_{\mathrm{RFF}} \) such that \( g_2^{(n)} \to g_1 \) in the \( L^2 \) sense, that is, 
\[ \lim_{n \to \infty} \|g_2^{(n)} - g_1\|_{L^2} = 0. \]

Because the error function \( E(\cdot) \) is continuous, we can interchange the function with the limit:
\[ \lim_{n \to \infty} E(g_2^{(n)}) = E\left(\lim_{n \to \infty} g_2^{(n)}\right) = E(g_1). \]

For each \( n \), \( g_2^{(n)} \) is an element of \( \mathcal{H}_{\mathrm{RFF}} \), so its error \( E(g_2^{(n)}) \) must be greater than or equal to the infimum of errors over \( \mathcal{H}_{\mathrm{RFF}} \): 
\[ \inf_{g \in \mathcal{H}_{\mathrm{RFF}}} E(g) \le E(g_2^{(n)}). \]
This inequality holds for all \( n \). Taking the limit as \( n \to \infty \) on both sides yields
\[ \inf_{g \in \mathcal{H}_{\mathrm{RFF}}} E(g) \le \lim_{n \to \infty} E(g_2^{(n)}). \]

Substituting the continuity result gives 
\[ \inf_{g \in \mathcal{H}_{\mathrm{RFF}}} E(g) \le E(g_1). \]

The inequality holds for any arbitrary \( g_1 \in \mathcal{H}_{f} \). This implies that \( \inf_{g \in \mathcal{H}_{\mathrm{RFF}}} E(g) \) is a lower bound for the set of values \( \{E(g) \mid g \in \mathcal{H}_{f}\} \). By the definition of an infimum (greatest lower bound), this value must be less than or equal to the infimum of the set:
\[ \inf_{g \in \mathcal{H}_{\mathrm{RFF}}} E(g) \le \inf_{g \in \mathcal{H}_{f}} E(g). \]

\subsubsection{Proof of the universal approximation corollary}
\label{app:universal_approximation}

The following inequality holds:
    \[
        \| u - u_{\omega,\theta} \|^2 \leq \| u - \tilde{u}_{\omega,W} \|^2 + \| \tilde{u}_{\omega,W} - u_{\omega,\theta} \|^2.
    \]
    The universal approximation theorem by \cite{hornik1991approximation} guarantees that for any $\varepsilon > 0$, there exists $p$ and $\omega_*, W_*$ such that $\| u - \tilde{u}_{\omega_*,W} \|^2 \leq \varepsilon / 2$. Theorem 2 of the main paper %
    ensures that there exists $D$ and $\theta_*$ such that $\| \tilde{u}_{\omega_*,W_*} - u_{\omega_*,\theta_*} \|^2 \leq \varepsilon / 2$. Consequently, the norm between $u$ and $u_{\omega_*,\theta_*}$ is smaller than $\varepsilon$ and that concludes the proof.

\section{IFeF-PINN Hyperparameters}
\label{app:hyper_setting}
This section details the hyperparameters used by the proposed IFeF-PINN method in each experiment (Table~\ref{tab:hyper_setting}). Here, $D$ denotes the number of Fourier-enhanced features; $\sigma$ is the standard deviation of the sampled frequencies in random Fourier features (RFF); $\gamma$ is the regularization parameter defined in Eq.~\eqref{eq:regularized_solution}; and Pre-training indicates a warm-up stage where a vanilla PINN is trained for several thousand epochs to provide a good initialization for basis selection in IFeF-PINN.

As discussed in \cite{wang2021eigenvector}, the selection of $\sigma$ should align with the target function's frequency content. However, we fix $\sigma=1$ across all cases in our experiments and obtain accurate approximations. In future work, a more detailed analysis of $\sigma$ will be presented.

\begin{table}[ht]
  \centering
  \setlength{\tabcolsep}{10pt}
  \renewcommand{\arraystretch}{1.15}
  \begin{tabular}{l c c c c c}
    \toprule
    \textbf{Problem} & Pre-training & $D$ & $\sigma$ & $\lambda_{\mathrm{LL}}$ & $\gamma$ \\
    \midrule
    2D Helmholtz ($a_1=1,a_2=4$)      & No & 800 & 1 & 1e-2 & 1e-6 \\
    2D Helmholtz ($a_1=a_2=100$)      & No & 2400 &1 & 1e-7 & 1e-4 \\
    1D Convection ($\beta=50$)        & Yes & 800 &1 & 1e-2 &1e-7 \\
    1D Convection ($\beta=200$)       & Yes & 1600 &1 & 1e-2 & le-4/1e-7 \\
    Viscous Burgers ($\nu = \frac{0.01}{\pi} $)                & Yes & 800 &1 & 1e-1 & 0 \\
    \shortstack{Convection-Diffusion \\ ($k_\text{low} = 4\pi$, $k_\text{high} = 60\pi$)}           & Yes & 800 &1 & 1e-2 & 1e-7 \\
    \bottomrule
  \end{tabular}
  \caption{Hyperparameters setting for IFeF-PINN under each experiment}
  \label{tab:hyper_setting}
  \vspace*{-0.5cm}
\end{table}

\section{Experiment Setup}
\label{app:pdes_setup}

\subsection{PDEs Setup}
\label{app:pde_setup}
In this section, we provide detailed PDE settings used as our benchmarks.

\textbf{2D Helmholtz Equation.} The Helmholtz equation is an elliptic PDE that commonly arises in the study of wave propagation, acoustics, and electromagnetic fields. We consider the 2D Helmholtz equation as follows:
\begin{equation}
\label{eq:helmholtz2d}
\begin{aligned}
&\nabla^2 u + u = f, \quad  (x,y) \in \Omega, \\
&u(x,y) = 0, \quad (x,y) \in \partial \Omega,
\end{aligned}
\end{equation}
corresponding to a source term
\[
    f(x,y) = - \pi^2 \left( a_1^2 \sin(a_1 \pi x) \, \sin(a_2 \pi y) - a_2^2 \sin(a_1 \pi x) \sin(a_2 \pi y) \right) + \sin(a_1 \pi x) \sin(a_2 \pi y).
\]
The parameters $a_1$ and $a_2$ define the frequency of the analytic solution $ u(x,y) = \sin(a_1\pi x)\sin(a_2\pi y)  $. We will investigate the following different frequency cases:
\begin{itemize}
    \item low-frequency: $a_1=1, a_2=4, \Omega=[-1,1]\times[-1,1]$.
    \item high-frequency: $a_1=100, a_2=100, \Omega=[0,0.2]\times[0,0.2]$.
\end{itemize}

The high-frequency case uses a reduced domain to maintain computational tractability. By the Nyquist-Shannon sampling criterion \cite{shannon2006communication}, resolving such high-frequency oscillations over a larger domain would require prohibitively dense collocation. Despite the smaller domain, the configuration covers $10\times 10$ wavelengths, capturing the extreme oscillatory behavior.

\textbf{1D Convection Equation.} The Convection equation is a hyperbolic PDE that describes the movement of a substance through fluids. We consider the periodic boundary conditions system as follows:
\begin{equation}
\label{eq:convection}
\begin{aligned}
    &\frac{\partial u}{\partial t} + \beta\frac{\partial u}{\partial x} = 0 , \quad (t,x) \in [0,1]\times[0, 2\pi] , \\
    &u(x,0)=\sin x,\\
    &u(0,t) = u(2\pi, t).
\end{aligned}
\end{equation}
The closed-form solution is $u(x,t) = \sin(x - \beta t)$. We consider a low-frequency case $\beta=50$ and a high-frequency case $\beta=200$ on the same domain.

\textbf{1D Convection-Diffusion Equation.} The Convection–Diffusion equation is a parabolic PDE that models the combined effects of transport by fluid motion and spreading due to diffusion. We consider the multi-scale system with periodic boundary conditions as follows:
\begin{equation}
\label{eq:convection-diffusion}
\begin{aligned}
& \frac{\partial u}{\partial t} + c \frac{\partial u}{\partial x} = d \frac{\partial^2 u}{\partial x^2}, \quad (t, x) \in [0,1]\times[0,1], \\
& u(t, 0) = u(t, 1), \\
& \frac{\partial u}{\partial x}(t, 0) = \frac{\partial u}{\partial x}(t, 1), \\
& u(0, x) = A_{\text{low}} \sin(k_{\text{low}} x) + A_{\text{high}} \sin(k_{\text{high}} x).
\end{aligned}
\end{equation}
The analytic multi-scale solution is
\begin{equation*}
    u(t, x) = A_{\text{low}} e^{-d k_{\text{low}}^2 t} \sin\left(k_{\text{low}}(x - ct)\right) + A_{\text{high}} e^{-d k_{\text{high}}^2 t} \sin\left(k_{\text{high}}(x - ct)\right).
\end{equation*}
To set a multi-scale problem consists of both low- and high-frequency components, the parameters are chosen as follows:
\[c = 1, d=0.00005, A_{\text{low}} = 1, A_{\text{high}}=0.1, k_{\text{low}} = 4\pi, k_{\text{high}} = 60\pi. \]

\textbf{Viscous Burgers' Equation.} The Viscous Burgers' equation is a nonlinear parabolic PDE that models fluid motion by combining convection and diffusion effects. We consider the nonlinear system as follows:
\begin{equation}
\begin{aligned}
& \frac{\partial u}{\partial t} + u \frac{\partial u}{\partial x} =  \nu\frac{\partial^2 u}{\partial x^2}, \quad (t, x) \in [0, 1] \times [-1,1], \\
& u(0, x) = -\sin (\pi x),\\
& u(t, -1) = u(t, 1) = 0, \\
\end{aligned}
\end{equation}
where $\nu = \frac{0.01}{\pi}$.

\subsection{Spectrum Analysis Setup}
\label{app:spectrum}
Given frequencies $\kappa = \{f_i\}_{i=1}^{i=10} = \{1, 2, 5, 10, 30, 40, 50, 60, 70, 80\}$, where all amplitudes are chosen as $A_i = 1$, we consider the Convection equation in~\ref{eq:convection} with $\beta =1$ and initial condition as follows:
\begin{equation*}
    u(x,0)=\sum_{i=1}^{10} A_i \sin( 2 \pi f_i \ x).
\end{equation*}
The corresponding analytic solution is then given by
\begin{equation*}
    u(x,t) = \sum_{i=1}^{10} A_i \sin\big(2 \pi f_i \ (x - t)\big).
\end{equation*}
The problem domain is defined as $(t,x) \in [0,1] \times [0,1]$.
Our objective is to evaluate and compare the ability of Vanilla PINNs and Fourier-enhanced Features to capture all frequency components at $t=0$.

For the neural network architecture, we employ an 8-layer fully connected network with $\tanh$ activation functions and 64 neurons per layer. The training data consist of $201$ uniformly sampled points along the two spatial boundaries ($x=0$ and $x=1$) and at the final time ($t=1$). In addition, $201 \times 201$ collocation points are uniformly sampled within the interior domain to enforce the physics constraints.

To further design this experiment as an ablation study and demonstrate that the incorporation of Fourier-enhanced Features for basis extension is a necessary component of our proposed IFeF-PINN, we discard the iterative training procedure and retain only the basis extension step. Specifically, we first train the Vanilla PINN for 40,000 epochs using the Adam optimizer with a learning rate of $10^{-3}$. We then extend the basis with varying numbers of Fourier-enhanced Features, $D_j \in \{400, 800, 1600, 2400, 3200, 4000\}$, and then solve the lower-level problem defined in Equation 6 of the main paper. %
Moreover, to ensure a more rigorous analysis, we impose the relation $B_{D_i} \subset B_{D_j}$ whenever $D_j > D_i$, so that the RFF mapping matrices are nested. 

\subsection{Model Setup}
\label{app:model_setup}
This section details the model setup for all baselines. Unless otherwise specified, we use a multi-layer perceptron (MLP) whose depth and width are determined by the experimental setting. For the 2D Helmholtz case ($a_1=1,\,a_2=4$), we follow the network structure in \cite{barreau2025accuracy}; for the viscous Burgers' equation, we follow \cite{raissi2019physics}. For PINNsformer \citep{ZhaoEtAl24} and PIG \citep{KangEtAl25}, since they both have special network architectures, we adopt the original architecture. All experiments use the $\tanh$ activation function and the Adam optimizer with a learning rate of $10^{-3}$ for the network parameters.
For IFeF-PD, we adopt the Primal-Dual weight balancing strategy proposed in \cite{barreau2025accuracy}, and optimize the dynamic physics weight with the same setting for Adam at a learning rate of $10^{-4}$. The network architectures used in each experiment are summarized in Table~\ref{tab:NN_archi}.

\begin{table}[ht]
  \centering
  \setlength{\tabcolsep}{10pt}
  \renewcommand{\arraystretch}{1.15}
  \begin{tabular}{lccc}
    \toprule
    \textbf{Problem} & \textbf{Hidden layers} & \textbf{Hidden width}  \\
    \midrule
    2D Helmholtz ($a_1=1,a_2=4$)                 & 3 & [50,50,20] \\
    2D Helmholtz ($a_1=a_2=100$)                 & 6 & 64 \\
    1D Convection ($\beta=50$)                     & 6 & 64  \\
    1D Convection ($\beta=200$)                     & 6 & 64  \\
    Viscous Burgers ($\nu = \frac{0.01}{\pi} $)                                 & 8 & 20\\
     \shortstack{Convection-Diffusion \\ ($k_\text{low} = 4\pi$, $k_\text{high} = 60\pi$)}                      & 6 & 64  \\
    \bottomrule
  \end{tabular}
  \caption{Network architecture for all problems.}
  \label{tab:NN_archi} 
  \vspace*{-0.5cm}
\end{table}

\subsection{Dataset setup}
\label{app:data_setup}
In this section, we detail the dataset setup for each equation and experiment. For the 1D Convection equation, 2D Helmholtz equation (low-frequency), and Convection-Diffusion equation, we follow the setting and strategy of \cite{ZhaoEtAl24}. For the Viscous Burgers' Equation, we follow \cite{raissi2019physics}. The detailed settings are summarized in Table~\ref{tab:dataset_setting}.

\begin{table}[htbp]
  \centering
  \setlength{\tabcolsep}{10pt}
  \renewcommand{\arraystretch}{1.15}
  \begin{tabular}{l c c c}
    \toprule
    \textbf{Problem} & \textbf{Sampling} & \textbf{Boundary points} & \textbf{Physics points}  \\
    \midrule
    2D Helmholtz ($a_1=1,a_2=4$)    & Uniform      
      & $1000$ & $71\times71$  \\
    2D Helmholtz ($a_1=a_2=100$) & LHS         
      & $3000$ & $23000$ \\
       \addlinespace[2pt]
    1D Convection ($\beta=50$)      & Uniform    
      & \makecell{$303^{\dagger}$ \\ $153^{\ddagger}$} 
      & \makecell{$(101\times101)^{\dagger}$ \\ $(51\times51)^{\ddagger}$} \\
       \addlinespace[2pt]
    1D Convection ($\beta=200$)     & Uniform       
      & $303$ & $101\times101$ \\
     Viscous Burgers ($\nu = \frac{0.01}{\pi} $)             & LHS        
      & $100$ & $10000$ \\
     \shortstack{Convection-Diffusion \\ ($k_\text{low} = 4\pi$, $k_\text{high} = 60\pi$)}     & Uniform
      & $404$ & $101\times101$ \\
    \bottomrule
  \end{tabular}
  \caption{Dataset settings for each PDE problem.}
  \vspace{2pt}
  {\footnotesize Notes: $^{\dagger}$ Vanilla/NTK/PIG; $^{\ddagger}$ PINNsformer/IFeF.}
  \label{tab:dataset_setting}
  \vspace*{-0.5cm}
\end{table}

\section{Experimental Results}
\label{app:additional_results}
In this section, we present the true solutions, model predictions, and absolute error maps for all baselines considered in our numerical experiments. Results for the viscous Burgers' equation, the low- and high-frequency convection equations, and the multi-scale convection-diffusion equation are shown in separate figures.
For clarity, each figure contains three panels: (i) the true solution, (ii) the model prediction, and (iii) the absolute error on a log10 scale. 

\begin{figure}
    \centering
    \begin{subfigure}{0.4\linewidth} 
        \centering
        \includegraphics[width=\linewidth]{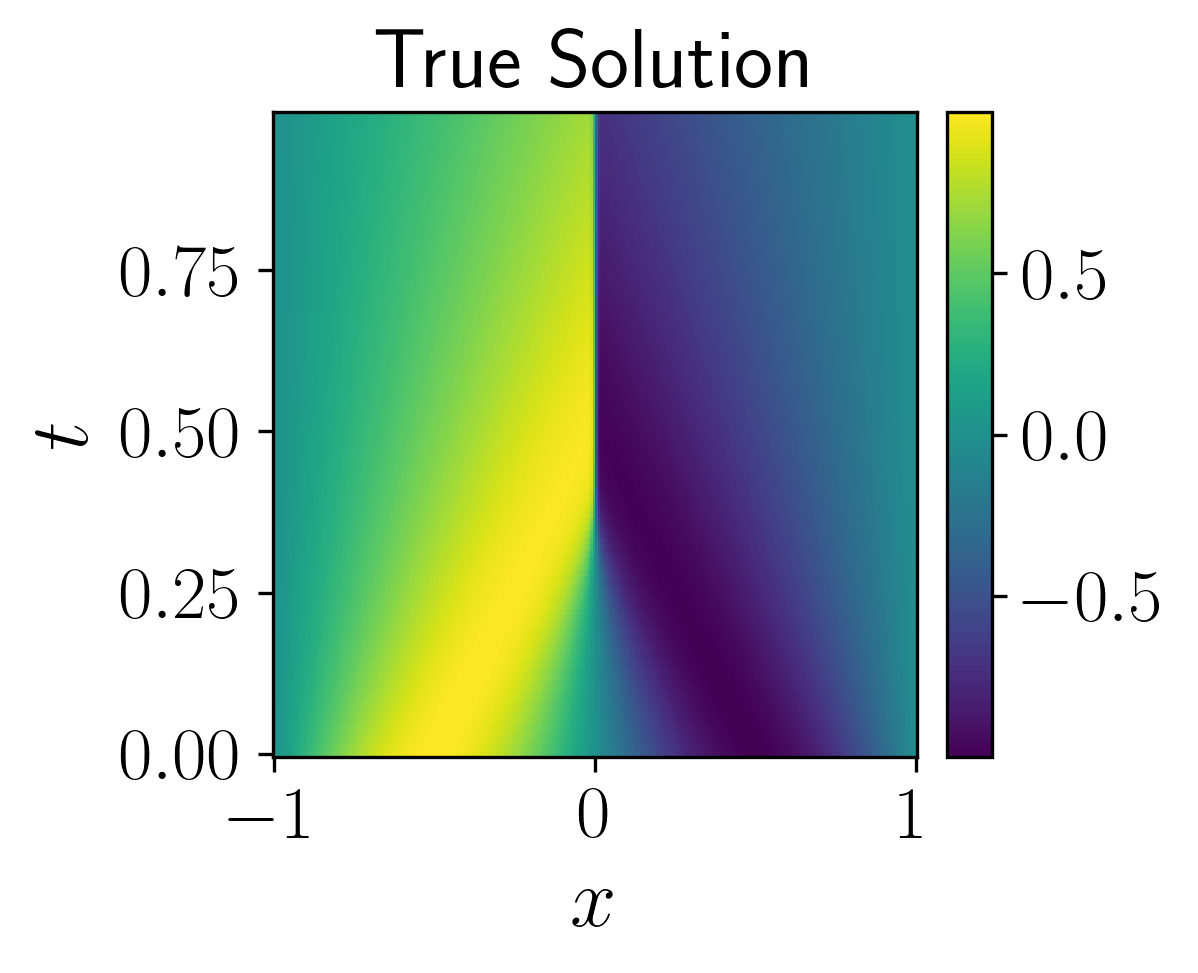}
        \caption{Exact solution}
    \end{subfigure}
    \\[0.3cm] 
    \begin{subfigure}{0.99\linewidth}
        \centering
        \includegraphics[width=\linewidth]{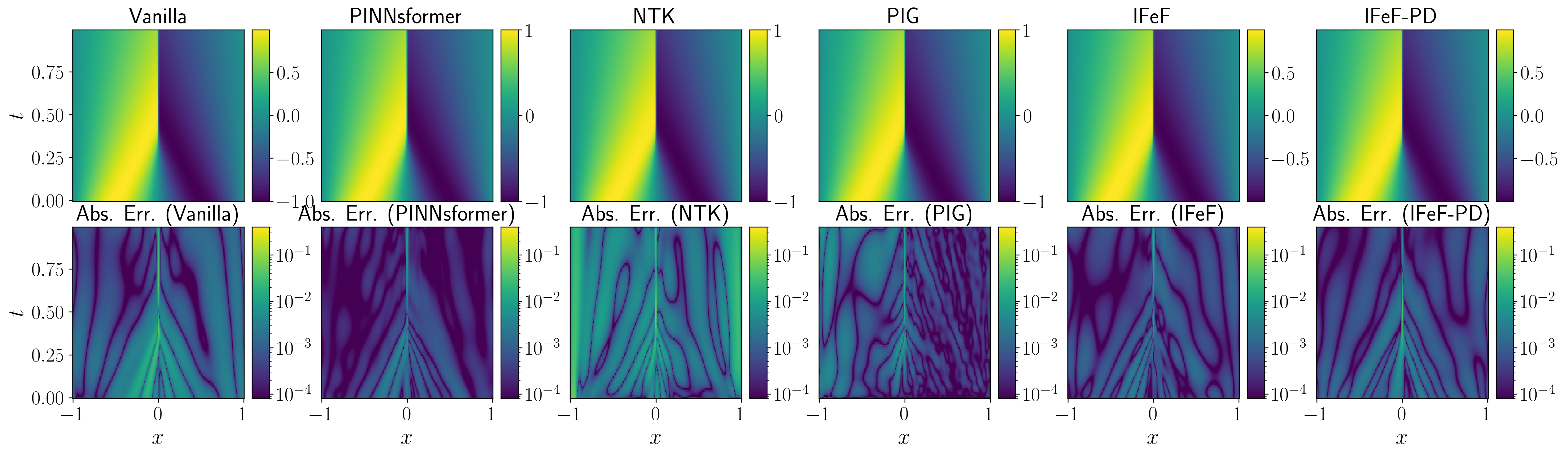}
        \caption{Prediction solution (top) and absolute error on a log10 scale (bottom) of baseline methods}
    \end{subfigure}
    \caption{True solution, prediction and absolute error of baseline methods for viscous Burgers' equation }
    \label{fig:burgers}
    \vspace*{-0.5cm}
\end{figure}

\begin{figure}
    \centering
    \begin{subfigure}{0.4\linewidth} 
        \centering
        \includegraphics[width=\linewidth]{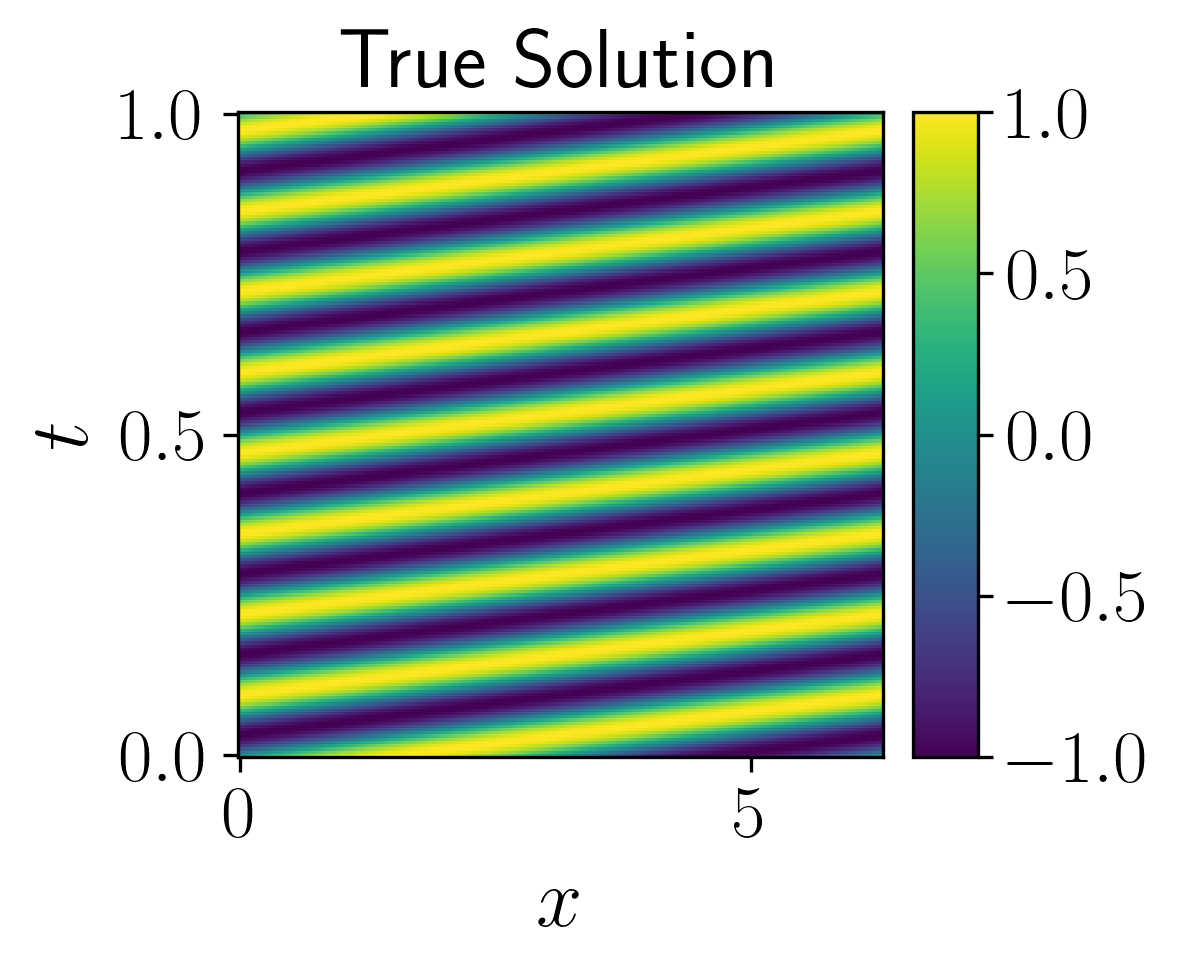}
        \caption{Exact solution}
    \end{subfigure}
    \\[0.3cm] 
    \begin{subfigure}{0.99\linewidth}
        \centering
        \includegraphics[width=\linewidth]{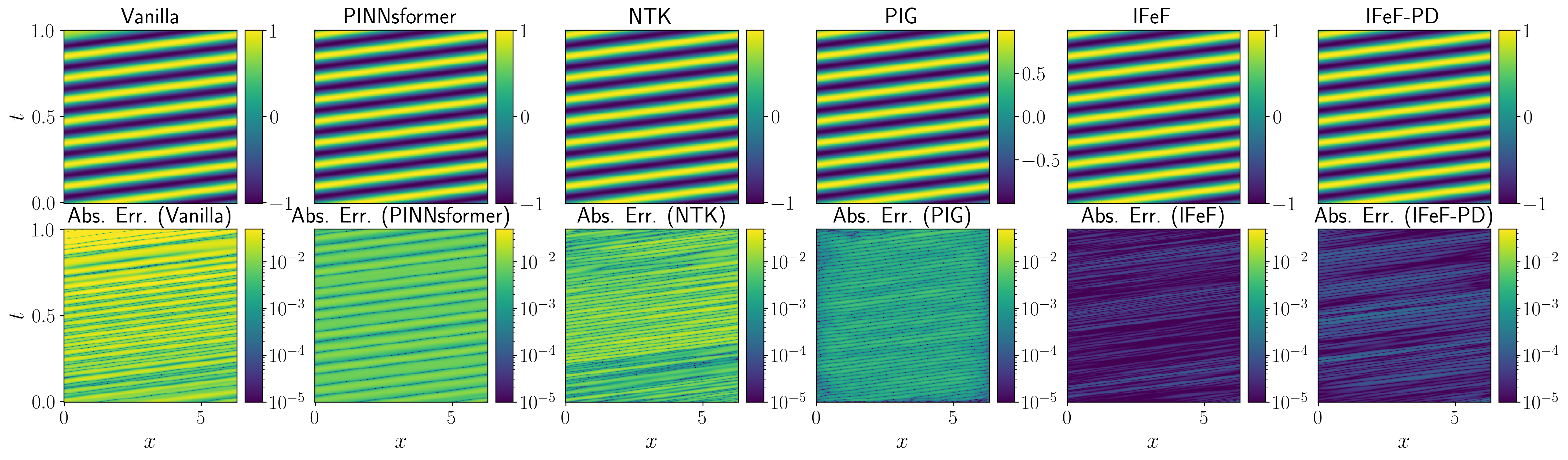}
        \caption{Prediction solution (top) and absolute error on a log10 scale (bottom) of baseline methods}
    \end{subfigure}
    \caption{True solution, prediction, and absolute error of baseline methods for low-frequency convection equation }
    \label{fig:convection_lowfi}
    \vspace*{-0.5cm}
\end{figure}

\begin{figure}
    \centering
    \begin{subfigure}{0.4\linewidth} 
        \centering
        \includegraphics[width=\linewidth]{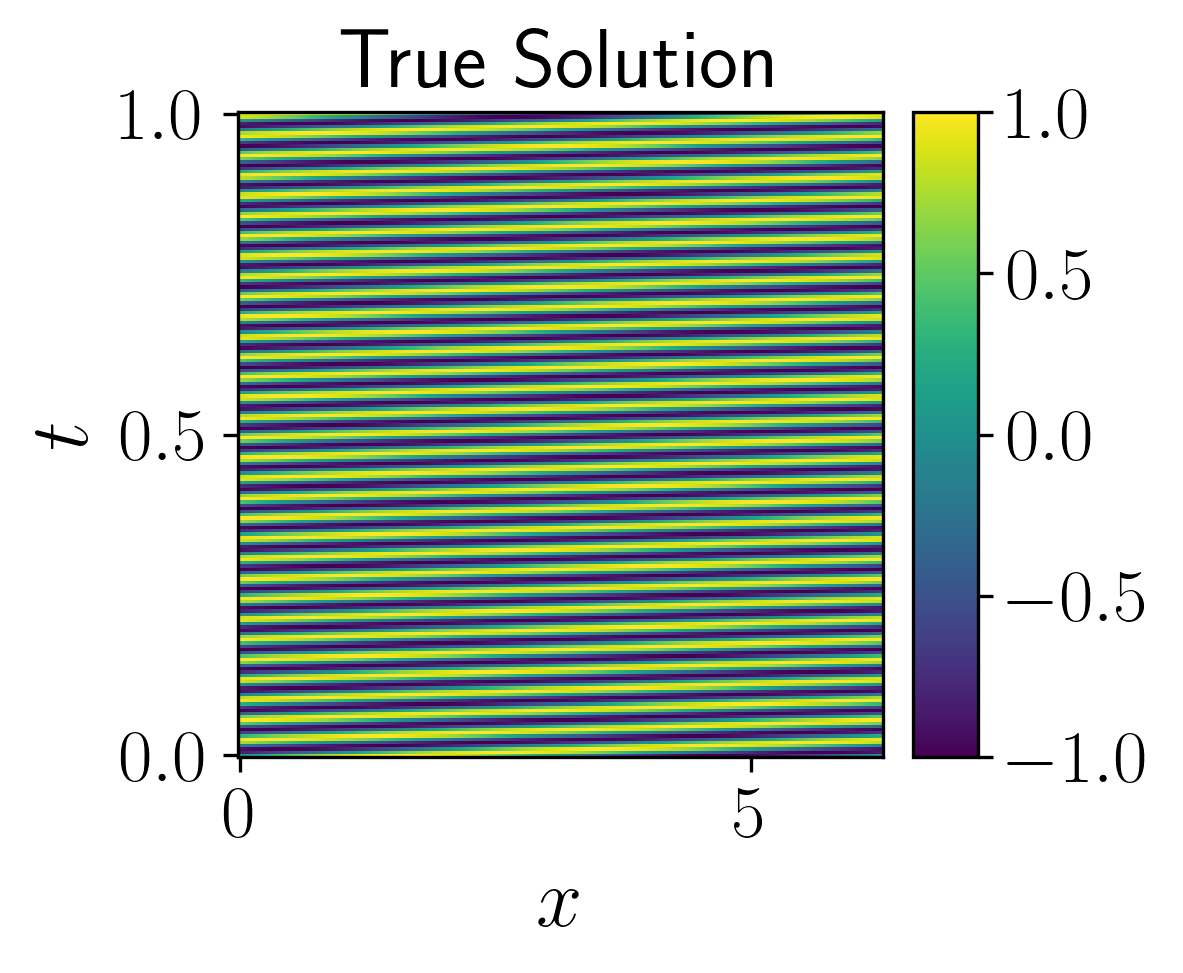}
        \caption{Exact solution}
    \end{subfigure}
    \\[0.3cm] 
    \begin{subfigure}{0.99\linewidth}
        \centering
        \includegraphics[width= \linewidth]{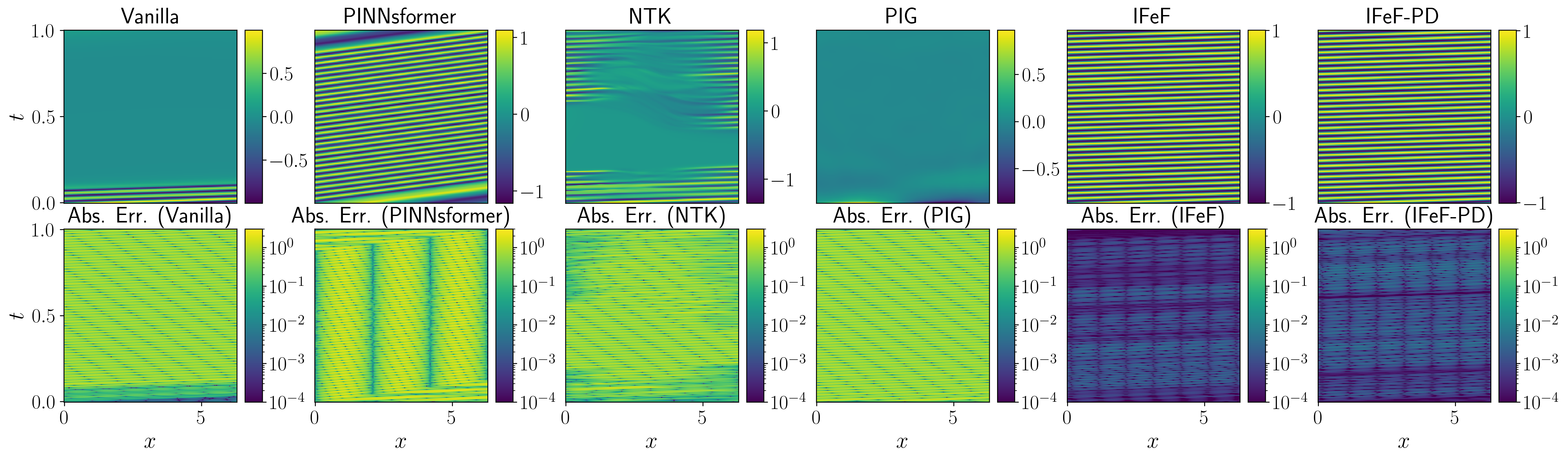}
        \caption{Prediction solution (top) and absolute error on a log10 scale (bottom) of baseline methods}
    \end{subfigure}
    \caption{True solution, prediction, and absolute error of baseline methods for high-frequency convection equation }
    \label{fig:convection_hifi}
    \vspace*{-0.5cm}
\end{figure}

\begin{figure}
    \centering
    \begin{subfigure}{0.4\linewidth} 
        \centering
        \includegraphics[width=\linewidth]{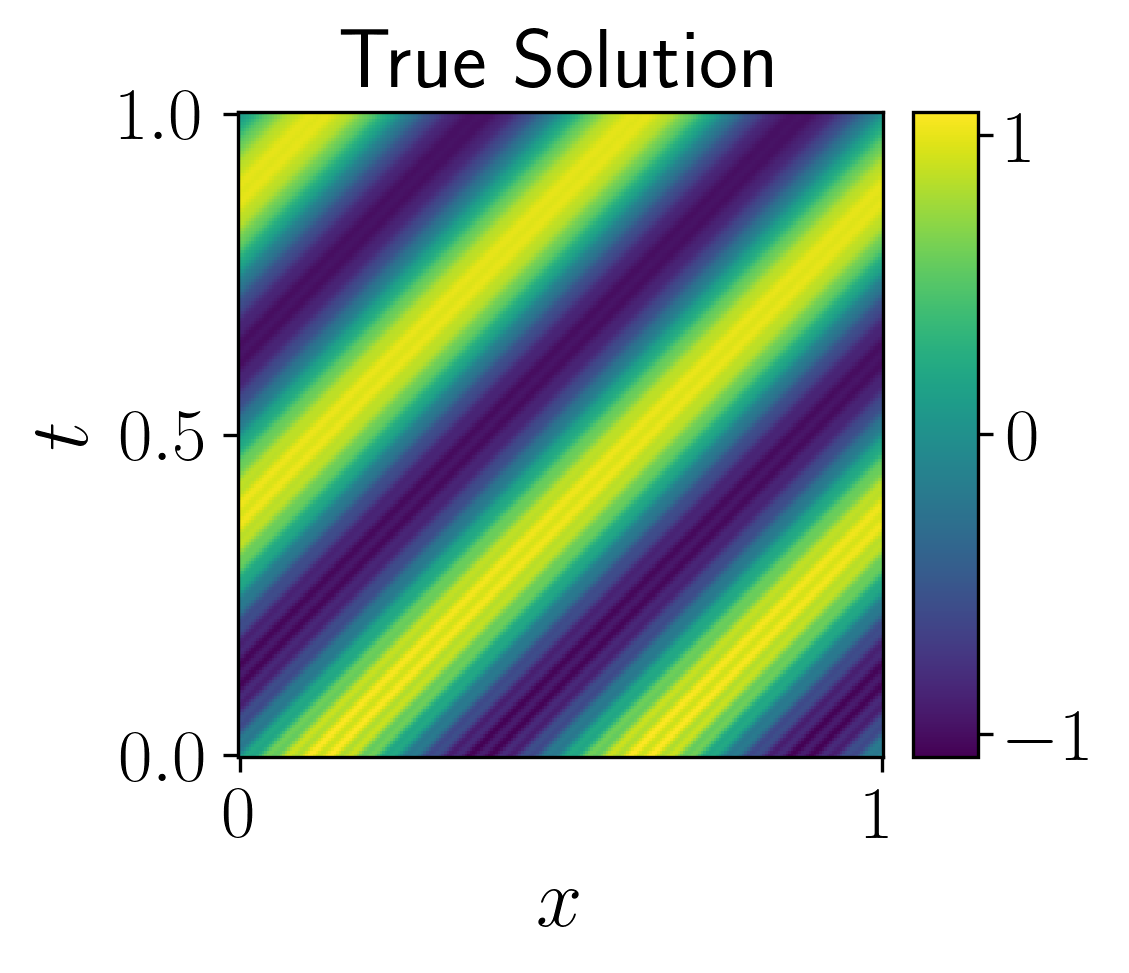}
        \caption{Exact solution}
    \end{subfigure}
    \\[0.3cm] 
    \begin{subfigure}{0.99\linewidth}
        \centering
        \includegraphics[width=\linewidth]{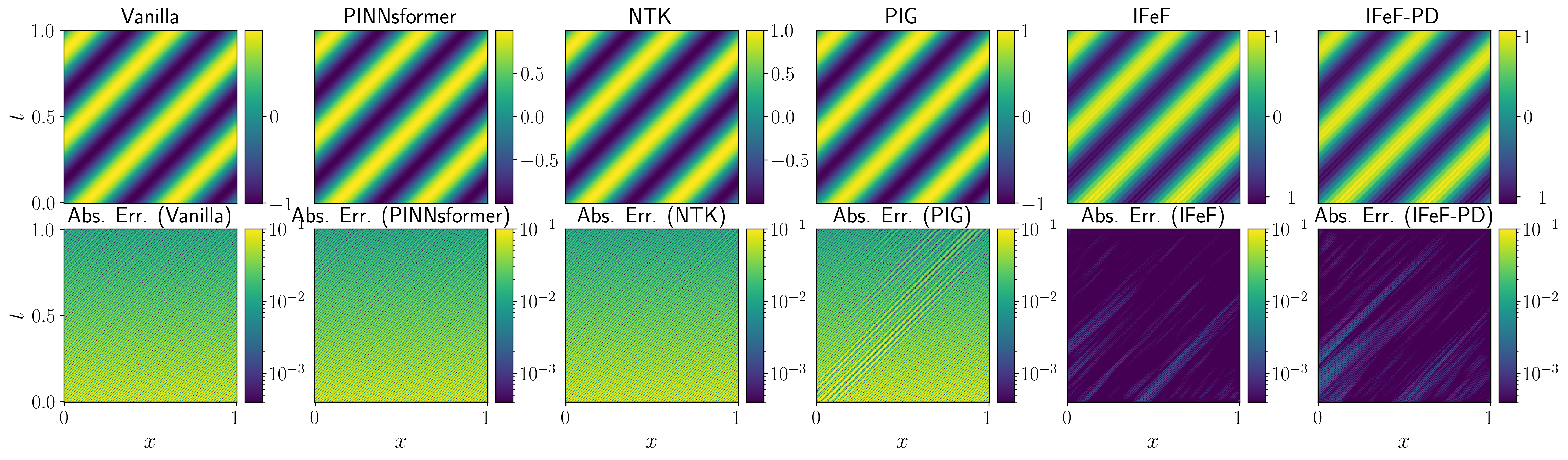}
        \caption{Prediction solution (top) and absolute error on a log10 scale (bottom) of baseline methods}
    \end{subfigure}
    \caption{True solution, prediction, and absolute error of baseline methods for multi-scale convection-diffusion equation }
    \label{fig:diffusion_multi}
    \vspace*{-0.5cm}
\end{figure}

The case presented in Figure~\ref{fig:diffusion_multi} is particularly interesting. We observe that the analytical solution is the sum of the two frequencies ($20$ and $60$). If we zoom in on the error plot, it is possible to see that for all other methods than IFeF, the high-frequency component is not caught. Despite visually similar plots, the error can be quite large. This phenomenon does not appear with IFeF-PINN.

\subsection{Comparison with Input-Space RFF}
\label{app:Baseline_MFF}
We compare IFeF-PINN with the Multi-scale      Fourier Features (MFF) method proposed by~\cite{wang2021eigenvector}, which applies multiple RFF mappings derived in Equation 4 of the main paper to the input layer of the neural network to mitigate spectral bias. 

Table~\ref{tab:mff_compare} summarizes the relative $L^2$-error and standard deviation across the low-frequency, high-frequency and multi-scale benchmarks. IFeF-PINN consistently outperforms MFF in approximation performance.

\begin{table}[htbp]
  \centering
  \setlength{\tabcolsep}{10pt}
  \renewcommand{\arraystretch}{1.15}
  \begin{tabular}{l c c c }
    \toprule
    \textbf{Baseline} & \shortstack{Convection \\ ($\beta=50$)}  & \shortstack{Convection \\ ($\beta=200$)}  & \shortstack{Convection-Diffusion \\ ($k_\text{low} = 4\pi$, $k_\text{high} = 60\pi$)}  \\
    \midrule
     MFF      &$2.14 \times 10^{-2} (4.42\times 10^{-3})$ & $3.50 \times 10^{-1} (2.27\times 10^{-1})$ & $5.21 \times 10^{-2} (4.21\times 10^{-4})$  \\
      IFeF      &$7.0 \times 10^{-5} (1.6\times10^{-3}) $ & $2.7 \times 10^{-3}(1.0\times10^{-3}) $   & $9.0 \times 10^{-4}(3.0\times10^{-4}) $  \\
     IFeF-PD      &$9.0 \times 10^{-5} (5.0\times10^{-4}) $ & $2.5 \times 10^{-3}(5.0\times10^{-4}) $   & $1.0 \times 10^{-3}(2.0\times10^{-4}) $  \\
    \bottomrule
  \end{tabular}
  \caption{Average relative $L^2$-error with corresponding standard deviation across 3 benchmarks for IFeF-PINN, IFeF-PD and MFF.}
  \label{tab:mff_compare}
  \vspace*{-0.25cm}
\end{table}

\subsection{Comparison with computational cost}
We provide a comparison of the computational costs for the 5 linear benchmarks among IFeF-PINN, vanilla PINNs, and the SOTA baseline PIG proposed by~\cite{KangEtAl25}. Tables~\ref{tab:ifef_computation} and~\ref{tab:baseline_computation} summarize the average training time per epoch, total training time, and memory usage for the three methods. To better analyze the computational cost in IFeF-PINN, we decompose the per-epoch training time into the upper-level and lower-level components.

\begin{table}[htbp]
  \centering
  \setlength{\tabcolsep}{6pt}
\renewcommand{\arraystretch}{1.15}
  \begin{tabular}{l c c c c}
    \toprule
 \textbf{IFeF-PINN}  & Per upper (s)  & Per lower (s) & Total time (s) & \ Memory (GB)  \\
    \midrule
    \shortstack{2D Helmholtz \\ ($a_1=1,a_2=4$)}    &  $0.015$  & $0.003$ & $448$  & $1.41$  \\
    \shortstack{2D Helmholtz \\ ($a_1=a_2=100$)} & $0.154$  & $0.042$ & $1960$ & $18.5$   \\
       \addlinespace[2pt]
    \shortstack{1D Convection \\ ($\beta=50$)}      & $0.024$   & $0.003$ &$ 108$ & $4.80$  \\
       \addlinespace[2pt]
    \shortstack{1D Convection \\ ($\beta=200$)}     & $0.051$  & $0.010$ & $610$  &  $5.85$  \\
     \shortstack{Convection-Diffusion \\ ($k_\text{low} = 4\pi$, $k_\text{high} = 60\pi$)} & $0.052$  & $0.003$  & $110$ & $ 5.26$  \\
    \bottomrule
  \end{tabular}
  \caption{Average training time per epoch for upper- and lower-level, total training time and memory usage for IFeF-PINN among 5 linear benchmarks.}
    \label{tab:ifef_computation}
  \vspace*{-0.25cm}
\end{table}

\begin{table}[htbp]
  \centering
  \setlength{\tabcolsep}{5pt}
  \renewcommand{\arraystretch}{1.15}
  \begin{tabular}{l ccc ccc}
    \toprule
    & \multicolumn{3}{c}{\textbf{Vanilla PINN}} & \multicolumn{3}{c}{\textbf{PIG}} \\
    \cmidrule(lr){2-4} \cmidrule(lr){5-7}
    \textbf{Problem} & \makecell{Per epoch\\(s)} & \makecell{Total\\(s)} & \makecell{Memory\\(GB)} & \makecell{Per epoch\\(s)} & \makecell{Total\\(s)} & \makecell{Memory\\(GB)} \\
    \midrule
    2D Helmholtz ($a_1=1,a_2=4$) & $0.003$ & $116$ & $0.2$ & $0.62$ & $248$ & $6.8$ \\
    2D Helmholtz ($a_1=a_2=100$) & $0.006$ & - & $0.55$ & $1.70$ & - & $20.6$ \\
    \addlinespace[2pt]
    1D Convection ($\beta=50$) & $0.002$ & $18$ & $0.10$ & $1.13$ & $565$ & $5.8$ \\
    \addlinespace[2pt]
    1D Convection ($\beta=200$) & $0.002$ & - & $0.10$ & $1.63$ & - & $14.7$ \\
    \shortstack{Convection-Diffusion \\ ($k_\text{low} = 4\pi$, $k_\text{high} = 60\pi$)} & $0.003$ & $27$ & $0.19$ & $0.85$ & $43$ & $9.8$ \\
    \bottomrule
  \end{tabular}
  \caption{Average training time per epoch, total training time and memory usage for Vanilla PINNs and PIG among 5 linear benchmarks. A dash '-' denotes that the method failed to achieve a meaningful approximation for the corresponding equation and is therefore excluded from the total training time.}
  \label{tab:baseline_computation}
  \vspace*{-0.25cm}
\end{table}

The results demonstrate that vanilla PINN achieves the fastest training but the poorest accuracy. PIG with its default L-BFGS optimizer converges in fewer epochs but incurs the highest memory cost due to the evaluation of numerous learnable Gaussian bases at collocation points. IFeF-PINN demonstrates lower memory usage than PIG while maintaining acceptable training time. Notably, IFeF-PINN's training is dominated by upper-level basis learning, while the lower-level QP solving is highly efficient.

\end{document}